\DeclareMathOperator{\sgn}{sgn}
\newcolumntype{B}{>{\color{black}}c}
\DeclareMathOperator{\diag}{diag}  
\newcommand{\argmin}{\mbox{argmin}}
\newcommand{\lp}{(}
\newcommand{\rp}{)}
\newcommand{\R}{\mathbb{R}}
\newcommand{\ve}{\mbox{vec}}
\newcommand{\hv}{\mbox{hvec}}
\newcommand{\nct}{\frac{n(n+1)}{2}}
\newcommand{\RNum}[1]{\uppercase\expandafter{\romannumeral #1\relax}}
\newcommand{\ho}[0]{\color{black}}
\begin{document}
\title{Quadratic Surface Twin Support Vector Machine for Imbalanced Data 
 \thanks{
H.~Moosaei and M.~Hlad\'{\i}k were supported by the Czech Science Foundation Grant 25-15714S, 
 and 
 H.~Fu by the National Natural Science Foundation of China Grant 72201052.}
}
\titlerunning{Kernel-Free Universum Quadratic Surface Twin Support Vector Machines for Imbalanced Data}        

\author{Hossein Moosaei
\and Milan Hlad\'{i}k  
\and \mbox{Ahmad Mousavi}$^*$\and Zheming Gao \and Haojie Fu
}

\institute{$*$ Corresponding Author \at
  \and
  Hossein Moosaei \at
  Department of Informatics, Faculty of Science, Jan Evangelista Purkyně University, \'{U}st\'{i} nad Labem, Czech Republic\\
              \email{hmoosaei@gmail.com, hossein.moosaei@ujep.cz}   
    \and 
     Milan Hlad\'{i}k  \at
  Department of Applied Mathematics, Faculty  of  Mathematics  and  Physics, Charles University, Prague, Czech Republic \\
  \email{hladik@kam.mff.cuni.cz}    
    \and
    Ahmad Mousavi\at
  Department of Mathematics and Statistics, American University, Washington, DC, USA \\
  \email{mousavi@american.edu}
 \and
  Zheming Gao \at
  Department of Industrial and Systems Engineering, University of North Carolina at Charlotte, USA\\
  \email{zgao7@charlotte.edu}
 \and
    Haojie Fu \at
  College of Information Science and Engineering, Northeastern University, Shenyang, Liaoning 110819, China\\
  \email{haojie12325@163.com}
}
\maketitle

\begin{abstract}
Binary classification tasks with imbalanced classes pose significant challenges in machine learning. Traditional classifiers often struggle to accurately capture the characteristics of the minority class, resulting in biased models with subpar predictive performance. In this paper, we introduce a novel approach to tackle this issue by leveraging Universum points to support the minority class within quadratic twin support vector machine models. Unlike traditional classifiers, our models utilize quadratic surfaces instead of hyperplanes for binary classification, providing greater flexibility in modeling complex decision boundaries. By incorporating Universum points, our approach enhances classification accuracy and generalization performance on imbalanced datasets. We generated four artificial datasets to demonstrate the flexibility of the proposed methods. Additionally, we validated the effectiveness of our approach through empirical evaluations on benchmark datasets, demonstrating superior performance compared to conventional classifiers and existing methods for imbalanced classification.
\end{abstract}
\keywords{Binary classification, Quadratic support vector machine, Twin support vector machine, Class imbalance, Universum data.}

\section{Introduction} \label{sec: introduction}
The support vector machine (SVM) model marked a pivotal advancement in binary classification, achieving this by identifying two parallel supporting hyperplanes that maximize the minimum margin between classes through a convex quadratic programming problem \cite{vapnik1974theory}. Weston et al. introduced the Universum support vector machine ($\mathfrak{U}$-SVM), which enhances the model by incorporating prior information \cite{weston2006inference}. This is achieved by including Universum samples that do not belong to any class. Additionally, the Twin Support Vector Machine (TSVM) represents a significant extension, utilizing two non-parallel hyperplanes to simultaneously separate the classes. This innovative approach maximizes the margin between them while ensuring sufficient coverage of both majority and minority class instances.

However, many real-world applications pose challenges, as they often involve data that is not linearly separable. Addressing this requires robust techniques capable of effectively handling nonlinear situations. One well-established approach is to leverage the concept of kernel methods, which relies on the existence of a nonlinear mapping. This mapping transforms the original data into a higher-dimensional, potentially infinite-dimensional feature space, where the data becomes linearly separable. Notably, the appeal of kernel methods lies in their ability to bypass the need for explicitly knowing this mapping. Instead, they focus on selecting appropriate kernel functions with advantageous properties. However, the challenge lies in determining the suitable kernel function, and fine-tuning the associated hyperparameters can be computationally demanding. As a result, there is a growing interest in practical methods that seek nonlinear classifiers directly in the original feature space.

Kernel-free models offer a direct approach for handling nonlinearly separable datasets without the need to map them to a higher-dimensional feature space. The quadratic surface support vector machine (QSSVM) \cite{dagher2008quadratic} utilizes quadratic surfaces for class separation. Bai et al.\cite{bai2015quadratic} introduced a kernel-free least squares QSSVM tailored for disease diagnosis. A kernel-free support vector machine, utilizing a double-well potential proposed in \cite{gao2021kernel}, aims to achieve a specialized fourth-order polynomial separating surface. However, when pursuing nonlinear classifiers, the complexity of the optimization program significantly increases due to the involvement of numerous decision variables, leading to computational challenges and potential overfitting. To mitigate this, incorporating a surrogate that promotes sparsity among decision variables proves advantageous \cite{moosaei2023sparse,mousavi2022quadratic,mousavi2020survey}. To achieve less expensive quadratic programs, \cite{gao2019quadratic} discusses a least squares twin QSSVM, leveraging two quadratic surfaces for data representation.

Class imbalance in binary classification presents numerous challenges that can significantly impact the performance and reliability of machine learning models \cite{haixiang2017learning}. When one class dominates the dataset, classifiers tend to exhibit bias towards the majority class, resulting in subpar predictions for the minority class. This imbalance can lead to various issues, including reduced sensitivity and specificity, misinterpretation of model accuracy, and distorted decision boundaries. In critical domains such as fraud detection or medical diagnoses, where the minority class represents rare events, the consequences of misclassification can be severe \cite{fotouhi2019comprehensive,makki2019experimental}. Failing to address class imbalance may result in suboptimal model performance, compromised decision-making, and heightened risks of costly errors or missed opportunities. Thus, it is imperative to implement effective strategies to mitigate class imbalance and ensure that machine learning models accurately capture the underlying data patterns, irrespective of class distribution.

{\ho
{\paragraph{Gap in the literature.}  
Existing methods either (i) rely on kernel mappings with high computational cost, (ii) do not explicitly address class imbalance, or (iii) use quadratic surfaces without effectively incorporating additional prior knowledge (such as Universum points). As a result, there is a lack of models that simultaneously handle nonlinear decision boundaries, class imbalance, and incorporate prior knowledge in a computationally efficient framework.}}

{ Building on this gap, our} methodology draws inspiration from the approach introduced in \cite{richhariya2020reduced}, which incorporates Universum points to bolster the minority class in linearly separable datasets. Building upon this foundation, we extend its application to the framework of quadratic twin support vector machines (QTSVM). Leveraging the inherent flexibility of QTSVM, which employs quadratic surfaces instead of hyperplanes for binary classification, we aim to enhance the efficacy of Universum points in addressing class imbalance. This innovative integration enables us to capture intricate class relationships and adapt to complex data distributions more effectively. In addition to the hinge loss, we derive the least squares formulation, which adopts the $\ell_2$ norm to penalize slack variables and replaces inequality and equality constraints. To validate the effectiveness of our proposed methods, we conduct experiments on various datasets, including artificial and public benchmark datasets. These experiments demonstrate the efficacy and efficiency of our proposed models compared to other well-known SVM models.

{\ho
In summary, the main contributions of this paper are as follows:
\begin{itemize}
    \item We introduce the Imbalanced Universum Quadratic Twin Support Vector Machine (Im-$\mathfrak{U}$-QTSVM), which leverages quadratic surfaces instead of hyperplanes to better capture complex decision boundaries in binary classification.
    \item We incorporate Universum points to guide the minority class, improving generalization and reducing bias toward the majority class in imbalanced datasets.
    \item We propose both hinge-loss and least squares formulations, enabling either standard QP solutions or closed-form solutions for computational efficiency.
     \item We establish the theoretical properties and computational characteristics of the proposed models, including guaranteed optimality, uniqueness under mild conditions, sufficient conditions for nonzero solutions, and an analysis of computational complexity.

    \item We demonstrate the effectiveness of the proposed methods through experiments on both artificial and benchmark datasets, showing superior performance compared to other existing approaches.
\end{itemize}
}
\paragraph{Notation.}
$ \mathbb R^n$ denotes the $ n$-dimensional real vector space. For $x\in \mathbb R^n$, the notation $|x|$ signifies element-wise absolute value. $A^T$ represents the transpose of a matrix $A$ and $\|\cdot\|$ denotes the Euclidean norm.
Consider a real-valued function $f$ defined on $\mathbb R^n$. The gradient of $f$ at a point $x$ is denoted by the $n$-dimensional column vector $\nabla f(x)$.
Let $S_n$ denote the set of all $n\times n$ symmetric matrices. For a matrix $A\in S_n$, the notation $A\succ 0$ indicates that $A$ is positive definite.

\section{Problem Statement and Related Works} \label{sec: related_works}

We start by describing the fundamental classical and kernel-free models proposed in the literature for binary classification when dealing with an (almost) linearly separable data set. 

\subsection{Universum Twin  Support Vector Machine}
To optimize binary classification by identifying two hyperplanes that are maximally distant from each other and as close as possible to their respective classes, Jayadeva et al.~\cite{khemchandani2007twin} proposed the Twin Support Vector Machine (TSVM). This model delineates two non-parallel hyperplanes, as illustrated below:
\begin{equation*}
w_{1}^{T}x+b_{1}=0,\quad \mbox{and} \quad w_{2}^{T}x+b_{2}=0,
\end{equation*}
where $w_{1}, w_{2 } \in \mathbb{R}^{n}$ and  $b_{1}, b_{2 }\in \mathbb{R}$.
Assume that the data points belonging to class $+1$ and class $-1$ are represented by the rows of matrices $A \in \mathbb{R}^{m_{1} \times n}$ and $B \in \mathbb{R}^{m_{2} \times n}$, respectively. Solving the following two quadratic programming problems yields the TSVM classifiers: 
\begin{equation}\label{tsvm}
\begin{aligned}
\min_{w_{1},b_{1},\xi_{1}} \quad & \frac{1}{2} \left\|A w_{1}+e_{1}b_{1} \right\|^2 + \frac{C_{1}}{2} e_{2}^{T}\xi_{1} \\
\textrm{s.t.} \quad & -( B w_{1}+e_{2}b_{1})+\xi_{1} \ge  e_{2}, \nonumber\ \
 \xi_{1} \ge  0,
\\ \\
\min_{w_{2},b_{2},\xi_{2}} \quad & \frac{1}{2} \left\|B w_{2}+e_{2}b_{2} \right\|^2 + \frac{C_{2}}{2} e_{1}^{T}\xi_{2} \\
\textrm{s.t.} \quad & ( A w_{2}+e_{1}b_{2})+\xi_{2} \ge  e_{1}, \ \
 \xi_{2} \ge  0,
\end{aligned}
\tag{TSVM}
\end{equation}
where $ C_{1} $,  $ C_{2} > 0 $ are  penalty parameters,  $ \xi _{1}  $, $ \xi _{2}  $ are slack vectors, and $ e_{1} $, $ e_{2  }$ are vectors of ones  of appropriate dimension.

Universum data is defined as a set of unlabeled samples that do not belong to any specific class \cite{sinz2007analysis,weston2006inference}. This type of data can encode past knowledge by providing meaningful information within the same domain as the problem at hand. Incorporating Universum data has been shown to effectively improve learning performance in both classification and clustering tasks. By integrating Universum data into classical models, researchers have enhanced the computational efficiency and generalization ability of various existing methods \cite{moosaei2023universum,moosaei2023sparse,weston2006inference,xiao2021new}. 

Let $U\in \mathbb{R}^{r\times m}$ be a matrix where each row represents a Universum point. 
The Universum Twin  Support Vector Machine ($ \mathfrak{U}$-TSVM) was proposed to enhance the classification performance of (\ref{tsvm}) \cite{qi2012twin}. The $\mathfrak{U}$-TSVM was constructed by incorporating Universum data in the TSVM model,
as the following  pair of quadratic programming problems (QPPs):
  \begin{equation}\tag{$\mathfrak{U}$-TSVM}
  \begin{aligned}\label{utsvm} 
\mathop {\min }_{w_{1},b_{1},\xi_{1},\psi_{1} \,\,} & \frac{1}{2}{{\left\|A w_{1}+e_{1}b_{1} \right\|}^{2}}+\dfrac{C_{1}}{2}e_{2}^{T}\xi_{1}+\frac{C_{u}}{2} e_{u}^{T}\psi_{1}\nonumber \\
\textrm{s.t.} \quad& -( B w_{1}+e_{2}b_{1})+{{\xi }_{1}}\ge  e_{2},\nonumber \\ 
 & ( Uw_{1}+e_{u}b_{1} )+\psi_{1}\ge   (-1+\varepsilon)e_{u}, \nonumber \\ 
 & \xi _{1} ,  \psi_{1}\ge  0,\nonumber
\\ \\
\mathop {\min }_{w_{2},b_{2},\xi_{2},\psi_{2} \,\,} & \frac{1}{2}{{\left\|B w_{2}+e_{2}b_{2} \right\|}^{2}}+\dfrac{C_{2}}{2}e_{1}^{T}\xi_{2}+\frac{C_{u}}{2} e_{u}^{T}\psi_{2}\nonumber \\ 
\textrm{s.t.} \quad & ( A w_{2}+e_{1}b_{2})+{{\xi }_{2}}\ge  e_{1},\nonumber \\ 
 & -( Uw_{2}+e_{u}b_{2} )+\psi_{2}\ge   (-1+\varepsilon)e_{u}, \nonumber\\ 
 & \xi _{2},  \psi_{2}\ge  0,\nonumber
\end{aligned}
\end{equation}
where $C_1, C_2,$ and $C_u$ are positive penalty parameters, $\varepsilon\in (0, 1)$ is the tolerance value for the Universum class, and $\xi_{1}, \xi_{2}, \psi_{1},$ and $\psi_{2}$ are measures of the violation of the associated constraints. We may derive their dual problems by applying Lagrangian functions, as shown below.
\begin{eqnarray*}
\mathop {\max }_{\alpha, \beta} && -\frac{1}{2}(G^{T}\alpha-O^{T}\beta)^{T}(H^{T}H)^{-1}(G^{T}\alpha-O^{T}\beta)+e_{2}^{T}\alpha+(\varepsilon-1)e_{u}^{T}\beta  \\
\textrm{s.t.} && 0\leq\alpha\leq C_{1}e_{2},\nonumber \ \ 
  0\leq \beta\leq C_{u}e_{u},
\end{eqnarray*}
and
\begin{align*}
\mathop {\max }_{\alpha^{*}, \beta^{*}} &\  -\frac{1}{2}(H^{T}\alpha^{*}-O^{T}\beta^{*})^{T}(G^{T}G)^{-1}(H^{T}\alpha^{*}-O^{T}\beta^{*})+e_{1}^{T}\alpha^{*}+(\varepsilon-1)e_{u}^{T}\beta^{*} \\
\textrm{s.t.} & ~~0\leq\alpha^{*}\leq C_{2}e_{1},\nonumber \ \ 
 0\leq \beta^{*}\leq C_{u}e_{u},
\end{align*}
where $H=[A~~e_{1}], G=[B~~e_{2}] $, and $O=[U~~e_{u}]$. Once the above dual QPPs 
are solved, we can obtain the following parameters:
\begin{equation*}
\begin{bmatrix}
w_{1}\\b_{1}
\end{bmatrix}=-(H^{T}H)^{-1}(G^{T}\alpha-O^{T}\beta),
\quad \mbox{and} \quad 
\begin{bmatrix}
w_{2}\\b_{2}
\end{bmatrix}=(G^{T}G)^{-1}(H^{T}\alpha^{*}-O^{T}\beta^{*}).
\end{equation*}
Once these vectors
are obtained, the separating planes 
$w_{1}^{T}x+b_{1}=0$ and $w_{2}^{T}x+b_{2}=0,$
are known. 
A new data point $ x \in \mathbb{R}^{n} $ is allocated to class $i\in\{+1,-1\}$ by a rule similar to the TSVM.

\subsection{Least-Square Twin Support Vector Machine}
To improve the computational efficiency of a classifier, the Least-Square Twin Support Vector Machine (LS-TSVM)  \cite{kumar2009least} is introduced, inspired by TSVM. Unlike TSVM, LS-TSVM employs equality constraints rather than inequality constraints, leading to the solution of only a pair of linear equations. Typically, the linear LS-TSVM deals with the following pair of quadratic programming problems (QPPs) 

\begin{equation}
\begin{aligned} \label{ls-tsvm} 
 \mathop {\min }\limits_{w_{1},b_{1}\xi_{1}} \ & \frac{1}{2}{{\left\| Aw_{1}+e_{1}b_{1}\right\|}^{2}}+ \dfrac{C_{1}}{2}\|\xi_{1}\|^{2}\nonumber\\
\textrm{s.t.}~ & 
-\left( Bw_{1}+e_{2}b_{1} \right)+\xi_{1}=e_{2},\\
&\nonumber \\ 
\mathop {\min }\limits_{w_{2},b_{2},\xi_{2}} \ & \frac{1}{2}{{\left\| Bw_{2}+e_{2}b_{2}\right\|}^{2}}+ \dfrac{C_{2}}{2}\|\xi_{2}\|^{2} \nonumber\\
\textrm{s.t.}~ & 
\left( Aw_{2}+e_{1}b_{2} \right)+\xi_{2}= e_{1}, 
    \end{aligned} 
    \tag{LS-TSVM}
\end{equation}
where $C_{1}$ and $C_{2}$ are positive penalty parameters.  These convex programs have closed-form solutions. 

Substituting the equality constraints of the above problems into the associated objective functions, one obtains convex unconstrained optimization problems.
Thus, to find their optimum solutions, we set their gradients with respect to $w_{1},b_{1},w_{2}$ and $b_{2}$ equal to $0$. Consequently, we can find the two non-parallel hyperplanes by solving the following systems of linear equations
\begin{align}                       
\begin{bmatrix}
w_{1}\\
b_{1}
\end{bmatrix}  =-(H^{T}H+C_{1}G^{T}G)^{-1} (C_{1}G^{T}e_{2}),\notag 
\\ 
\begin{bmatrix}
w_{2}\\
b_{2}
\end{bmatrix}  =(G^{T}G+C_{2}H^{T}H)^{-1} (C_{2}H^{T}e_{1}), \notag 
\end{align}
using the same notation as before. 
A new data point $x \in \mathbb{R}^{n}$ is assigned to class one or two using a rule similar to that of the TSVM.

\subsection{Least-Square Universum Twin Support Vector Machine}
To enhance the generalization performance of LS-TSVM, Xu et al. \cite{xu2016least} introduced the incorporation of Universum data, leading to the development of the Least-Square Universum Twin Support Vector Machine (LS-$\mathfrak{U}$-TSVM). This model is formulated as follows:
\begin{equation}
\begin{aligned} \label{ls-u-tsvm} 
\mathop {\min }_{w_{1},b_{1},\xi_{1},\psi_{1} \,\,} &\frac{1}{2}{{\left\|A w_{1}+e_{1}b_{1} \right\|}^{2}}+\dfrac{C_{1}}{2}e_{2}^{T}\xi_{1}+\frac{C_u}{2} e_{u}^{T}\psi_{1}\nonumber \\
\textrm{s.t.} \quad & -( B w_{1}+e_{2}b_{1})+{{\xi }_{1}} = e_{2},\nonumber \\ 
 & ( Uw_{1}+e_{u}b_{1} )+\psi_{1}=  (-1+\varepsilon)e_{u},
\\ \\
\mathop {\min }_{w_{2},b_{2},\xi_{2},\psi_{2} \,\,} & \frac{1}{2}{{\left\|B w_{2}+e_{2}b_{2} \right\|}^{2}}+\dfrac{C_{2}}{2}e_{1}^{T}\xi_{2}+\frac{C_{u}}{2} e_{u}^{T}\psi_{2}\nonumber \\
\textrm{s.t.} \quad & ( A w_{2}+e_{1}b_{2})+{{\xi }_{2}}= e_{1},\nonumber \\ 
 & -( Uw_{2}+e_{u}b_{2} )+\psi_{2}= (-1+\varepsilon)e_{u}, 
    \end{aligned} 
    \tag{LS-$\mathfrak{U}$-TSVM}
\end{equation}
where $C_{1}$ and $C_{2}$ are positive penalty parameters, $\varepsilon\in (0, 1)$ is the tolerance value for the Universum class, and $\xi_{1}, \xi_{2}, \psi_{1}$ and $\psi_{2}$ are measures of the violation of associated constraints. 
The unique solution to these problems is closed-form, which is advantageous for handling large-scale applications. By employing the same solution method as LS-TSVM, we can obtain the two non-parallel hyperplanes by solving the following system of linear equations.
\begin{align*}          
\begin{bmatrix}
w_{1}\\
b_{1}
\end{bmatrix} 
=-\big(H^{T}H+C_{1}G^{T}G+C_{u}&O^{T}O\big)^{-1}
  \big(C_{1}G^{T}e_{2}+C_{u}(1-\varepsilon)O^{T}e_{u}\big),\\
  \begin{bmatrix}
w_{2}\\
b_{2}
\end{bmatrix} 
=\big(G^{T}G+C_{2}H^{T}H+C_{u}&O^{T}O\big)^{-1}
 \big(C_{2}H^{T}e_{1}+C_{u}(1-\varepsilon)O^{T}e_{u}\big).
\end{align*} 
Here, we use the same notation as for $\mathfrak{U}$-TSVM. 
A new data point $x \in \mathbb{R}^{n}$ is assigned to class $i \in \{+1, -1\}$ using a rule similar to that of the TSVM.

\subsection{
Quadratic Twin Support Vector Machine
}
Linear models, while effective for linearly separable data, may face limitations when dealing with more complex datasets. To address this challenge, kernel methods have been introduced. However, these methods have their drawbacks \cite{gao2019quadratic}. In response, many quadratic kernel-free methods have been studied recently \cite{mousavi2022quadratic,gao2021novel,gao2019rescaled}. The idea behind these models is analogous to that of their linear version, except they capture quadratic surfaces. Below, we elaborate on the Quadratic Kernel-Free Twin Support Vector Machine (QTSVM) model proposed by Gao et al. \cite{gao2019quadratic}. 

{ Suppose we are given a dataset of $n$ points 
$\{ (x_i, y_i) \}_{i=1}^n$, 
where $x_i $ are the feature vectors and $y_i \in \{-1, +1\}$ are the binary class labels. 
We can partition the indices of the data points based on their labels as follows:
$\mathcal{I}_1:= \{i \in \{1, \dots, n\} \ | \ y_i = 1 \}$ and $\mathcal{I}_2:= \{i \in \{1, \dots, n\} \ | \ y_i = -1 \}$.
}
Suppose we have two classes that include $|\mathcal{I}_1|$ and $|\mathcal{I}_2|$ points, respectively. Then, the formulation of the QTSVM model for binary classification becomes the following:
\begin{equation} \tag{QTSVM}
\label{QTSVM}
\begin{aligned}
\min_{ W_1, b_1,c_1, \xi_1
} \ \ & \sum_{i\in \mathcal{I}_1}(\frac{1}{2}x_i^TW_1x_i+b_1^Tx_i+c_1)^2+
C_1 \sum_{i\in \mathcal{I}_{2}} {\xi_1}_i\\
\textrm{subject to} \quad & 
1+(\frac{1}{2}{x_i^TW_1 x_i + b_1^Tx_i+c_1)\le{\xi_1}_i, \quad  \forall i\in \mathcal{I}_{2}}\\
&  W_1\in \mathbb S^n, b_1\in \mathbb R^n, c_1 \in \mathbb R, \xi_1\in \mathbb R^{|\mathcal{I}_{2}|}_+.
\\ \\
\min_{ W_2, b_2,c_2, \xi_2
} \ \ & \sum_{i\in \mathcal{I}_2}(\frac{1}{2}x_i^TW_2x_i+b_2^Tx_i+c_2)^2+
C_2 \sum_{i\in \mathcal{I}_1} {\xi_2}_i\\
\textrm{subject to} \quad & 
1-(\frac{1}{2}{x_i^TW_2 x_i + b_2^Tx_i+c_2)\le{\xi_2}_i, \quad  \forall i\in \mathcal{I}_{1}}\\
&  W_2\in \mathbb S^n, b_2\in \mathbb R^n, c_2 \in \mathbb R, \xi_2\in \mathbb R^{|\mathcal{I}_{1}|}_+.
\end{aligned}
\end{equation}
where $C_1$ and $C_2$ are positive hyperparameters.

It is important to note that the quadratic-based models are not in the standard form of quadratic programs. Therefore, we will introduce some notations and provide several definitions to facilitate their conversion into standard quadratic program forms. We shall be succinct here, as these definitions are mainly borrowed from \cite{mousavi2019quadratic}. 
For a square matrix $  A = [a_{ij}]_{i=1,\dots,n;j=1,\dots,n}\in \mathbb R^{n\times n}$, its vectorization is given by
$$\ve(A) := \left[a_{11},\dots, a_{n1},a_{12},\dots,a_{n2},\dots,a_{1n},\dots,a_{nn}\right]^T \in \R^{n^2}.$$
In case $A$ is symmetric, $\ve(A)$ contains redundant information, so we often consider its half-vectorization  given by:
$$\hv(  A) := \left[a_{11},\dots, a_{n1},a_{22},\dots,a_{n2},\dots,a_{nn}\right]^T \in \R^{\nct}.$$
Given $n\in \mathbb N$,  { there exists a unique} elimination matrix $  L_n \in \R^{\nct \times n^2}$ such that \cite{magnus1980elimination}
$$  L_n \ve(A) = \hv(A);\quad \forall \,   A \in \mathbb S^n,$$ 
and furthermore, this elimination matrix $  L_n$ has full row rank \cite{magnus1980elimination}. Conversely, for any $n\in \mathbb N$, there is a unique duplication matrix $  D_n \in \R^{n^2 \times \nct}$ such that 
$$  D_n \hv(A) = \ve(A); \quad \forall\,   A \in  \mathbb S^n \qquad \mbox{and} \qquad L_nD_n=I_\nct.$$ 

\begin{definition} \label{def: definitios}
For fixed $i\in [m]$ and $j\in [r]$,  let
\begin{eqnarray}\nonumber
s_i &:=&   \frac{1}{2}\hv( x_i x_i^T), \qquad
s_j :=   \frac{1}{2}\hv(u_ju_j^T), \qquad
r_i :=  [s_i;x_i], \quad
r_j:=  [s_j;u_j],\\
w &:=& \hv(W), \quad
z: =  [w;b], \quad 
V:= \big[I_{\frac{n(n+1)}{2}} \, \,  0_{\frac{n(n+1)}{2}\times n}\big], \qquad \nonumber
X_i : = I_n \otimes x_i^T, \\
M_i &:=&  X_i D_n, \ \ 
H_i:=  [M_i\, \, I_n], \, \  \nonumber
 G :=  2 \sum_{i=1}^{m} H_i^T H_i,  \ \ 
X:= [x_1^T;x_2^T,\dots;x_m^T;u_1^T;u^T_2;\dots;u_r^T].
\end{eqnarray}
\end{definition}
Consequently, for fixed $i = 1, \dots, m $ and $j = 1, \dots, r $, we obtain the following equations:
\begin{equation*}
     \left\{ \begin{array}{ll}
\frac{1}{2} x_i^TW x_i + x_i^Tb + c=  z^T r_i+ c, & \\
\frac{1}{2}u_j^TW u_j + u_j^Tb + c=  z^T r_j+ c, & \\
Wx_i  =  X_i\ve( W) \, = X_i   D_n \hv(  W) \, = \, M_i \hv(  W) \, = \, M_iw, & \\
W x_i +    b  =  \, M_iw +  I_nb \, = \, H_iz, & \\
\sum_{i=1}^m \|Wx_i+b\|^2 =\sum_{i=1}^m (H_i    z)^T(H_i z) \,= \,    z^T (\sum_{i=1}^{m} (H_i)^T H_i)    z=\frac{1}{2}z^TGz.
\end{array} \right.
\end{equation*}

Note that subindices $i$ and $j$ are consistently used to correspond with data points and Universum points throughout this paper, respectively. 
Using the aforementioned notations, let $z_1$ and $z_2$ be the decision variables constructed using $(W_1, b_1)$ and $(W_2, b_2)$, respectively. These variables are related to the quadratic surfaces for class 1 and class 2, respectively, which will be captured by the twin models discussed later in the paper.
Consequently, we can express the equivalent problem of QTSVM in the standard form of a quadratic minimization problem as follows:

\begin{equation} \tag{QTSVM'}
\label{QTSVM'}
\begin{aligned}
\min_{ z_1, c_1, \xi_1
} \quad & \sum_{i\in \mathcal{I}_1}(z_1^Tr_i+c_1)^2+
C_1 \sum_{i = 1}^{|\mathcal{I}_2|} {\xi_1}_i\\
\textrm{subject to} \quad & 
1+(z_1^Tr_i+c_1)\le{\xi_1}_i, \quad  \forall i\in \mathcal{I}_{2}\\
& z_1\in \R^{\nct+n}, c_1 \in \mathbb R, \xi_1\in \mathbb R^{|\mathcal{I}_{2}|}_+.
\\ \\
\min_{ z_2,c_2, \xi_2
} \quad & \sum_{i\in \mathcal{I}_2}(z_2^Tr_i+c_2)^2+
C_2 \sum_{i = 1}^{|\mathcal{I}_1|} {\xi_2}_i\\
\textrm{subject to} \quad & 
1-(z_2^Tr_i+c_2)\le{\xi_2}_i, \quad  \forall i\in \mathcal{I}_{1}\\
&  
 z_2\in \R^{\nct+n}, c_2 \in \mathbb R, \xi_2\in \mathbb R^{|\mathcal{I}_{1}|}_+.
\end{aligned}
\end{equation}
After obtaining the solutions to these quadratic programs—either by solving them directly or through their dual formulations—the optimal solution to the problem (\ref{QTSVM}) is denoted as $(z_k^*, c_k^*) = (W_k^*, b_k^*, c_k^*)$ for $k = 1, 2$. A new data point $x$ is then assigned to class $k$ based on the following rule:
\begin{equation*}
    \mbox{Class} \, k=\argmin_{i=1,2} \frac{|z_i^Tr(x)+c_i|}{\|H(x)z_i+c_i\|_2^2}=\argmin_{i=1,2} \frac{|\frac{1}{2}x^TW_i x + b_i^Tx+c_i|}{\|W_ix+b_i\|_2^2}.
\end{equation*}

\subsection{Least Squares Quadratic Twin Support Vector Machine}
Combining the ideas of (\ref{ls-tsvm}) and (\ref{QTSVM}), the following model was introduced in \cite{gao2021novel} to capture two quadratic surfaces going through their corresponding classes yet distant from the other class:
\begin{equation} \tag{LS-QTSVM}
\label{LS-QTSVM}
\begin{aligned}
\min_{ W_1, b_1,c_1, \xi_1
} \quad & \sum_{i\in \mathcal{I}_1}(\frac{1}{2}x_i^TW_1x_i+b_1^Tx_i+c_1)^2+
C_1 \sum_{i = 1}^{|\mathcal{I}_2|} {\xi_1}^2_i\\
\textrm{subject to} \qquad & 
1+(\frac{1}{2}{x_i^TW_1 x_i + b_1^Tx_i+c_1)={\xi_1}_i, \quad  \forall i\in \mathcal{I}_{2}}\\
&  W_1\in \mathbb S^n, b_1\in \mathbb R^n, c_1 \in \mathbb R, \xi_1\in \mathbb R^{|\mathcal{I}_{2}|},
\\ \\
\min_{ W_2, b_2,c_2, \xi_2
} \quad & \sum_{i\in \mathcal{I}_2}(\frac{1}{2}x_i^TW_2x_i+b_2^Tx_i+c_2)^2+
C_2 \sum_{i = 1}^{|\mathcal{I}_1|} {\xi_2}^2_i\\
\textrm{subject to} \qquad & 
1-(\frac{1}{2}{x_i^TW_2 x_i + b_2^Tx_i+c_2)={\xi_2}_i, \quad  \forall i\in \mathcal{I}_{1}}\\
&  W_2\in \mathbb S^n, b_2\in \mathbb R^n, c_2 \in \mathbb R, \xi_2\in \mathbb R^{|\mathcal{I}_{1}|}.
\end{aligned}
\end{equation}
This model efficiently separates nonlinear classes, offering both accuracy and speed. Its ability to handle complex relationships without significant computational overhead makes it versatile and suitable for real-world applications across various domains.

The standard quadratic form of the latter model is the following:
\begin{equation} \tag{LS-QTSVM'}
\label{LS-QTSVM'}
\begin{aligned}
\min_{ z_1, c_1, \xi_1
} \quad & \sum_{i\in \mathcal{I}_1}(z_1^Tr_i+c_1)^2+
C_1 \sum_{i = 1}^{|\mathcal{I}_2|} {\xi_1}^2_i\\
\textrm{subject to} \quad & 
1+(z_1^Tr_i+c_1)={\xi_1}_i, \quad  \forall i\in \mathcal{I}_{2}\\
& z_1\in \R^{\nct+n}, c_1 \in \mathbb R, \xi_1\in \mathbb R^{|\mathcal{I}_{2}|},
\\ \\
\min_{ z_2,c_2, \xi_2
} \quad & \sum_{i\in \mathcal{I}_2}(z_2^Tr_i+c_2)^2+
C_2 \sum_{i = 1}^{|\mathcal{I}_1|} {\xi_2}^2_i\\
\textrm{subject to} \quad & 
1-(z_2^Tr_i+c_2)={\xi_2}_i, \quad  \forall i\in \mathcal{I}_{1}\\
&  
 z_2\in \R^{\nct+n}, c_2 \in \mathbb R, \xi_2\in \mathbb R^{|\mathcal{I}_{1}|}_+.
\end{aligned}
\end{equation}
These analogous models have { closed-form} solutions. For $k=1$ and $2$, let us introduce the following matrices:
\begin{equation*} \label{def: R_D_U}
R_k=[r_1, r_2, \dots, r_i, \dots,r_{|\mathcal{I}_{k}|}]
\quad \textrm{and} \quad 
D_k=\diag(y_1,y_2, \dots, y_{|\mathcal{I}_{k}|}).
\end{equation*}
Then, for $k=1$, solving the KKT conditions leads to:
\begin{equation*}
\begin{bmatrix}
R_1R_1^T+C_1R_{2}R_{2}^T
& R_1 e_1 +C_1R_{2}e_{2}
\\
e_1^T R_1^T+C_1 e_{2} R_{2}^T
&
|\mathcal{I}_1|+C_1|\mathcal{I}_{2}|
\end{bmatrix}
\begin{bmatrix}
z_1\\
c_1
\end{bmatrix}=\begin{bmatrix}
C_1R_{2}D_{{2}}e_{2}\\
C_1D_{2}e_{2}
\end{bmatrix}.
\end{equation*}
And, for $k=2$, we get:
\begin{equation*}
\begin{bmatrix}
R_2R_2^T+C_2R_{1}R_{1}^T
& R_2 e_2 +C_2R_{1}e_{1}
\\
e_2^T R_2^T+C_2e_{1} R_{1}^T
&
|\mathcal{I}_2|+C_2|\mathcal{I}_{1}|
\end{bmatrix}
\begin{bmatrix}
z_2\\
c_2
\end{bmatrix}=\begin{bmatrix}
C_2R_{1}D_{{1}}e_{1}\\
C_2D_{1}e_{1}
\end{bmatrix}.
\end{equation*}
After solving these systems for both classes, given a new point $x$, the decision function is based on the same rule analogous to that of QTSVM.

\subsection{$\mathfrak{U}$-QTSVM}
To integrate prior information in the form of Universum points and effectively separate nonlinear datasets, we bring the following model:
\begin{equation} \tag{$\mathfrak{U}$-QTSVM}
\label{UQTSVM}
\begin{aligned}
\min_{ W_1, b_1,c_1, \xi_1
} \ \  & \sum_{i\in \mathcal{I}_1}(\frac{1}{2}x_i^TW_1x_i+b_1^Tx_i+c_1)^2+
C_1 \sum_{i = 1}^{|\mathcal{I}_2|} {\xi_1}_i + {C_u} \sum_{j = 1}^{r} {\psi_{1}}_j \\
\textrm{subject to} \quad & 
1+(\frac{1}{2}{x_i^TW_1 x_i + b_1^Tx_i+c_1)\le{\xi_1}_i, \quad  \forall i\in \mathcal{I}_{2}}\\
    & \frac{1}{2} u_j^T   W_1u_j + b_1^Tu_j + c_1   \ge  -1+\varepsilon-{\psi_{1}}_{j},\quad  j=1,\dots,r,  \nonumber\\
&  W_1\in \mathbb S^n, b_1\in \mathbb R^n, c_1 \in \mathbb R, \xi_1\in \mathbb R^{|\mathcal{I}_{2}|}_+, \psi_{1}\in \mathbb R^{r}_+.
\\ \\
\min_{ W_2, b_2,c_2, \xi_2
} \ \ & \sum_{i\in \mathcal{I}_2}(\frac{1}{2}x_i^TW_2x_i+b_2^Tx_i+c_2)^2+
C_2 \sum_{i = 1}^{|\mathcal{I}_1|} {\xi_2}_i+{C_u}\sum_{j = 1}^{r} {\psi_{2}}_j \\
\textrm{subject to} \quad & 
1-(\frac{1}{2}{x_i^TW_2 x_i + b_2^Tx_i+c_2)\le{\xi_2}_i, \quad  \forall i\in \mathcal{I}_{1}}\\
& -\lp \frac{1}{2} u_j^T   W_2 u_j + b_2^Tu_j + c_2 \rp \ge  -1+\varepsilon-{\psi_{2}}_{j},\quad  j=1,\dots,r,  \nonumber\\
&  W_2\in \mathbb S^n, b_2\in \mathbb R^n, c_2 \in \mathbb R, \xi_2\in \mathbb R^{|\mathcal{I}_{1}|}_+,  \psi_{2}\in \mathbb R^{r}_+.
\end{aligned}
\end{equation}

In the standard quadratic form, the above model becomes:
\begin{equation} \tag{$\mathfrak{U}$-QTSVM'}
\label{UQTSVM'}
\begin{aligned}
\min_{ z_1, c_1, \xi_1,\psi_1
} \quad & \sum_{i\in \mathcal{I}_1}(z_1^Tr_i+c_1)^2+
C_1 \sum_{i = 1}^{|\mathcal{I}_2|} {\xi_1}_i
+C_u\sum_{j=1}^r {\psi_1}_j
\\
\textrm{subject to} \quad & 
1+(z_1^Tr_i+c_1)\le {\xi_1}_i, \quad  \forall i\in \mathcal{I}_{2}\\
& z_1^Tr_j+c_1\ge -1+\varepsilon-{\psi_1}_j \\
& z_1\in \R^{\nct+n}, c_1 \in \mathbb R, \xi_1\in \mathbb R_+^{|\mathcal{I}_{2}|}, \psi_1\in \mathbb R^r.
\\ \\
\min_{ z_2,c_2, \xi_2,\psi_2
} \quad & \sum_{i\in \mathcal{I}_2}(z_2^Tr_i+c_2)^2+
C_2 \sum_{i = 1}^{|\mathcal{I}_1|} {\xi_2}_i+C_u\sum_{j=1}^r {\psi_2}_j\\
\textrm{subject to} \quad & 
1-(z_2^Tr_i+c_2)\le {\xi_2}_i, \quad  \forall i\in \mathcal{I}_{1}\\
& -(z_2^Tr_j+c_2)\ge -1+\varepsilon-{\psi_2}_j \\
&  
 z_2\in \R^{\nct+n}, c_2 \in \mathbb R, \xi_2\in \mathbb R^{|\mathcal{I}_{1}|}_+,\psi_2\in \mathbb R^r.
\end{aligned}
\end{equation}
After solving these quadratic programs directly or through dual theory in convex optimization, a new point is assigned to a class following the same approach outlined for QTSVM.

\subsection{LS-$\mathfrak{U}$-QTSVM}
Incorporating Universum points may escalate the computational complexity of our models. Therefore, opting for a least squares version of the last model is often preferred.
This model can be formulated as follows:
\begin{equation} 
\label{ls-UQTSVM}
\begin{aligned}
\min_{ W_1, b_1,c_1, \xi_1, \psi_1
} \quad & \sum_{i\in \mathcal{I}_1}(\frac{1}{2}x_i^TW_1x_i+b_1^Tx_i+c_1)^2+
C_1 \sum_{i = 1}^{|\mathcal{I}_2|} {\xi_1}^2_i + {C_u}\sum_{j = 1}^{r} {\psi_{1}}^2_j \\
\textrm{subject to} \qquad & 
1+(\frac{1}{2}{x_i^TW_1 x_i + b_1^Tx_i+c_1)={\xi_1}_i, \quad  \forall i\in \mathcal{I}_{2}}\\
    & \frac{1}{2} u_j^T   W_1u_j + b_1^Tu_j + c_1   =  -1+\varepsilon-\psi_{j},\quad  j=1,\dots,r,  \nonumber\\
&  W_1\in \mathbb S^n, b_1\in \mathbb R^n, c_1 \in \mathbb R, \xi_1\in \mathbb R^{|\mathcal{I}_{2}|}, \psi_{1}\in \mathbb R^{r}.
\\ \\
\min_{ W_2, b_2,c_2, \xi_2, \psi_2
} \quad & \sum_{i\in \mathcal{I}_2}(\frac{1}{2}x_i^TW_2x_i+b_2^Tx_i+c_2)^2+
C_2 \sum_{i = 1}^{|\mathcal{I}_1|} {\xi_2}^2_i+{C_u} \sum_{j = 1}^{r} {\psi_{2}}^2_j \\
\textrm{subject to} \qquad & 
1-\big(\frac{1}{2}{x_i^TW_2 x_i + b_2^Tx_i+c_2\big)
  ={\xi_2}_i, \quad  \forall i\in \mathcal{I}_{1}}\\
& -\big( \frac{1}{2} u_j^T   W_2 u_j + b_2^Tu_j + c_2 \big) =-1+\varepsilon-\psi_{j},\quad  j=1,\dots,r,  \nonumber\\
&  W_2\in \mathbb S^n, b_2\in \mathbb R^n, c_2 \in \mathbb R, \xi_2\in \mathbb R^{|\mathcal{I}_{1}|},  \psi_{2}\in \mathbb R^{r}.
\end{aligned}
\tag{LS-$\mathfrak{U}$-QTSVM}
\end{equation}

The above model in the standard quadratic form is as follows:
\begin{equation} \tag{LS-$\mathfrak{U}$-QTSVM'}
\label{LS-UQTSVM'}
\begin{aligned}
\min_{ z_1, c_1, \xi_1,\psi_1
} \quad & \sum_{i\in \mathcal{I}_1}(z_1^Tr_i+c_1)^2+
C_1 \sum_{i = 1}^{|\mathcal{I}_2|} {\xi_1}^2_i
+C_u\sum_{j=1}^r {\psi_1}^2_j
\\
\textrm{subject to} \quad & 
1+(z_1^Tr_i+c_1)= {\xi_1}_i, \quad  \forall i\in \mathcal{I}_{2}\\
& z_1^Tr_j+c_1= -1+\varepsilon-{\psi_1}_j \\
& z_1\in \R^{\nct+n},\, c_1 \in \mathbb R,\, \xi_1\in \mathbb R^{|\mathcal{I}_{2}|},\, \psi_1\in \mathbb R^r.
\\ \\
\min_{ z_2,c_2, \xi_2,\psi_2
} \quad & \sum_{i\in \mathcal{I}_2}(z_2^Tr_i+c_2)^2+
C_2 \sum_{i = 1}^{|\mathcal{I}_1|} {\xi_2}^2_i+C_u\sum_{j=1}^r {\psi_2}^2_j\\
\textrm{subject to} \quad & 
1-(z_2^Tr_i+c_2)= {\xi_2}_i, \quad  \forall i\in \mathcal{I}_{1}\\
& -(z_2^Tr_j+c_2)= -1+\varepsilon-{\psi_2}_j \\
&  
 z_2\in \R^{\nct+n},\, c_2 \in \mathbb R,\, \xi_2\in \mathbb R^{|\mathcal{I}_{1}|},\, \psi_2\in \mathbb R^r.
\end{aligned}
\end{equation}
The closed-form solutions to these problems are obtained as follows.
For $k=1$ and $2$, we define:
\begin{equation*} 
R_{k}=[r_1, r_2, \dots, r_i, \dots,r_{|\mathcal{I}_{k}|}], 
\ \ 
D_{k}=\diag(y_1,y_2,\dots, y_{|\mathcal{I}_{k}|}).
\end{equation*}
Also, let $U=[r_1,r_2,\dots, r_j, \dots, r_r]$. Then, for $k=1$, the solution can be obtained by solving $A_1[z_1;c_1]=b_1$ such that
\begin{equation*}
A_1=\begin{bmatrix}
R_{1}R_{1}^T+C_1R_{2}R_{2}^T+C_uUU^T
& R_{1}e_1 +C_1R_{2}e_{2}+C_uU e_u \\
e_1 R_{1}^T+C_1 e_{2}^T R_{2}^T+C_u e^T_u U^T
&
|\mathcal{I}_{1}|+C_1|\mathcal{I}_{2}|+rC_u
\end{bmatrix}
\end{equation*}
and 
\begin{equation*}
b_1=\begin{bmatrix}
C_1R_{2}D_{{2}} e_{2} +
C_u(-1+\varepsilon)U e_u\\
C_1D_{2} e_{2} +rC_u(-1+\varepsilon)
\end{bmatrix},
\end{equation*}
where $e_u$ is the all-one vector of the dimension of the Universum data. 
And, for $k=2$, the solution is obtained by solving
$A_2[z_2;c_2]=b_2$ where 
\begin{equation*}
A_2=\begin{bmatrix}
R_{2}R_{2}^T+C_2R_{1}R_{1}^T+C_uUU^T
& R_{2}e_2 +C_2R_{1}e_{1}+C_uU e_u \\
e_2 R_{2}^T+C_2 e_{1}^T R_1^T+C_u e^T_u U^T
&
|\mathcal{I}_{2}|+C_2|\mathcal{I}_{1}|+rC_u
\end{bmatrix}
\end{equation*}
and 
\begin{equation*}
b_2=\begin{bmatrix}
C_2R_{1}D_{{1}} e_{1} -C_u(-1+\varepsilon)U e_u\\
C_2D_{1} e_{1} -rC_u(-1+\varepsilon)
\end{bmatrix}.
\end{equation*}
After solving these quadratic programs, either directly or via dual theory in convex optimization, a new point is assigned to a class using the same approach described for QTSVM.

\section{Imbalanced Universum Quadratic Twin Support Vector Machines} \label{sec: imbalance_methodology}
Class imbalance in binary classification poses significant challenges, leading to biased models and compromised predictive performance, particularly for minority classes. This imbalance can result in reduced sensitivity, misinterpretation of accuracy, and skewed decision boundaries, with severe consequences in critical domains like fraud detection or medical diagnoses. To address this, we draw inspiration from \cite{moosaei2023inverse,richhariya2020reduced}, incorporating Universum points to support minority classes in linearly separable datasets. Extending this to quadratic twin support vector machines, we leverage their flexibility to enhance the effectiveness of Universum points. Through this approach, we aim to improve classifier robustness and generalization, especially in imbalanced scenarios.

In the following parts, we maintain a general approach in which the minority class ($A$) is considered the positive class. To address the class imbalance, we employ random undersampling of the negative class ($B$) to create a balanced dataset for forming a quadratic surface for the minority class.
Precisely, letting
$ A=\{x_1,  x_2, \dots, x_{|\mathcal{I}_1|}\}$ and $B=\{\bar x_1,\bar  x_2, \dots, \bar x_{|\mathcal{I}_2|}\}$ with $|\mathcal{I}_1|\ll |\mathcal{I}_2|$, 
we randomly select a reduced sample 
$\tilde B= \{\tilde x_1, \tilde x_2, \dots, \tilde x_{|\mathcal{I}_1|}\}$
from the negative class. We also construct $U=\{u_1, u_2, \dots, u_r\}$ where $r = |\mathcal{I}_2|-|\mathcal{I}_1|$ Universum points using an averaging technique. Next, we choose a reduced sample of them and construct the reduced Universum sample 
$\hat U:=\{\hat u_1, \hat u_2, \dots, \hat u_g\}$
with $g = \lceil {|\mathcal{I}_1|}/{2} \rceil$. Note that $g \ll r$, and these values are deliberately chosen to ensure that the optimization problems associated with each class remain unbiased when an appropriate number of Universum points is added.

The main model of the paper presented next is designed to handle class imbalance when the positive class is assumed to be the minority. In this Imbalanced Universum Quadratic Twin Support Vector Machine (Im-$\mathfrak{U}$-QTSVM) model, we incorporate only $g$ Universum points in the formulation of the minority class to better refine the generalization boundaries in its favor. To address the bias caused by the majority class, we use $r$ Universum points in its formulation, almost as many as the number of points in this class. Slack variables are also used to penalize misclassification in the class points and noise in the Universum points. Furthermore, an $\ell_2$ regularization term on the Hessian of the quadratic surface has been added to enhance the stability and generalization capability of the model.
Consequently, the optimization problem for the Im-$\mathfrak{U}$-QTSVM model, which is based on the hinge loss function, can be formulated as follows:

\begin{equation}
\label{im-u-q-tsvm} \tag{Im-$\mathfrak{U}$-QTSVM}
\begin{aligned}
    \min \quad & \frac{1}{2}\sum_{i\in \mathcal{I}_1} ( \frac{1}{2} { x_i^T  W_1  x_i + b_1^Tx_i + {c_1}} )^2  + \frac{1}{2}C_1 \sum_{i = 1}^{|\mathcal{I}_1|} {\xi_{1}}_i + \frac{1}{2}C_{\hat u}\sum_{j = 1}^{g} {\psi_{1}}_j +\frac{1}{2}\lambda_1 \sum_{i\le j}(W_{1})^2_{ij}\\
   \textrm{s.t.} \quad & -( \frac{1}{2} \tilde{x}_i^T  W_1  \tilde{x}_i + b_1^T\tilde{x}_i + c_1 ) \ge  1 - {\xi_{1}}_i, \quad i=1,\dots,|\mathcal{I}_1| \\
    & (\frac{1}{2} \hat{u}_j^T  W_1\hat{u}_j + b_1^T\hat{u}_j + c_1 )  \ge  -1+\varepsilon-{\psi_1}_j,\quad  j=1,\dots,g,  \nonumber\\
    &  W_1 \in \mathbb{S}^n,\, b_1\in \mathbb{R}^n,\, c_1 \in \mathbb{R},\,
      \xi_1 \in \mathbb{R}^{|\mathcal{I}_1|}_+,\, \psi_1\in \mathbb R^g_+.
\\
 \min \quad & \frac{1}{2}\sum_{i\in \mathcal{I}_2} ( \frac{1}{2} {\bar  x_i^T  W_2 \bar  x_i + b_2^T\bar x_i + {c_2}} )^2  + \frac{1}{2}C_2\sum_{i = 1}^{|\mathcal{I}_1|} {\xi_{2}}_i + \frac{1}{2}C_u \sum_{j = 1}^{r} {\psi_{2}}_j +\frac{1}{2}\lambda_2\sum_{i\le j}(W_{2})^2_{ij}\\
   \textrm{s.t.} \quad &  \frac{1}{2} x_i^T  W_2  x_i + b_2^Tx_i + c_2  \ge  1 - {\xi_{2}}_i, \quad i=1,\dots, |\mathcal{I}_1| \\
    & \frac{1}{2} u_j^T   W_2 u_j+ b_2^Tu_j + c_2   \ge  1-\varepsilon- {\psi_{2}}_{j},\quad  j=1,\dots,r,  \nonumber\\
    &  W_2 \in \mathbb{S}^n,\, b_2 \in \mathbb{R}^n,\, c_2 \in \mathbb{R},\,
       \xi_2 \in \mathbb{R}^{|\mathcal{I}_1|}_+,\,\psi_2\in \mathbb R^r_+.
\end{aligned}
\end{equation}

Similarly, the least squares version of the model discussed above can manage class imbalance when the positive class is the minority. In the Imbalanced Least Squares Universum Quadratic Twin Support Vector Machine (Im-LS-$\mathfrak{U}$-QTSVM) model, we also utilize $g$ Universum points in the minority class formulation to better refine the generalization boundaries in its favor. Similarly, to mitigate the bias caused by the majority class, we employ $r$ Universum points in its formulation, which is nearly equivalent to the number of points in this class. Slack variables are incorporated to penalize misclassification in the class points and noise in the Universum points as well. Additionally, an $\ell_2$ regularization term is added to the Hessian of the quadratic surface to improve the stability and generalization capability of the model. Further, we consider the quadratic function as the loss function. 
Therefore, the optimization problem for the Im-LS-$\mathfrak{U}$-QTSVM model can be expressed as follows:

\begin{equation}
\label{im-ls-u-q-tsvm} \tag{Im-LS-$\mathfrak{U}$-QTSVM}
\begin{aligned}
    \min \quad & \frac{1}{2}\sum_{i\in \mathcal{I}_1} ( \frac{1}{2} { x_i^T  W_1  x_i + b_1^Tx_i + {c_1}} )^2  + \frac{1}{2}C_1 \sum_{i = 1}^{|\mathcal{I}_1|} {\xi_{1}}^2_i+ \frac{1}{2}C_{\hat u}\sum_{j = 1}^{g} {\psi_{1}}^2_j +\frac{1}{2}\lambda_1 \sum_{i\le j}(W_{1})^2_{ij}\\
   \textrm{s.t.} \quad & 1+ \frac{1}{2} \tilde{x}_i^T  W_1  \tilde{x}_i + b_1^T\tilde{x}_i + c_1 = {\xi_{1}}_i, \quad i=1,\dots,|\mathcal{I}_1| \\
    & -1+\varepsilon-(\frac{1}{2} \hat{u}_j^T   W_1\hat{u}_j + b_1^T\hat{u}_j+ c_1 ) =  {\psi_{1}}_j,\quad  j=1,\dots,g,  \nonumber\\
    &  W_1 \in \mathbb{S}^n,  b_1\in \mathbb{R}^n, c_1 \in \mathbb{R},  \xi_1 \in \mathbb{R}^{|\mathcal{I}_1|}, \psi_1\in \mathbb R^g.
\\
 \min \quad & \frac{1}{2}\sum_{i\in \mathcal{I}_2} ( \frac{1}{2} { \bar x_i^T  W_2 \bar   x_i + b_2^T\bar x_i + {c_2}} )^2  + \frac{1}{2}C_2\sum_{i = 1}^{|\mathcal{I}_1|} {\xi_{2}}^2_i+ \frac{1}{2}C_u \sum_{j = 1}^{r} {\psi_{2}}^2_j +\frac{1}{2}\lambda_2\sum_{i\le j}(W_{2})^2_{ij}\\
   \textrm{s.t.} \quad &  1-(\frac{1}{2} x_i^T  W_2  x_i + b_2^Tx_i + c_2) =  {\xi_{2}}_i, \quad i=1,\dots,|\mathcal{I}_1| \\
    &  1-\varepsilon-(\frac{1}{2} u_j^T   W_2 u_j+ b_2^Tu_j + c_2)   = {\psi_{2}}_{j},\quad  j=1,\dots,r,  \nonumber\\
    &  W_2 \in \mathbb{S}^n,  b_2 \in \mathbb{R}^n, c_2 \in \mathbb{R},  \xi_2 \in \mathbb{R}^{|\mathcal{I}_1|},\psi_2\in \mathbb R^r.
\end{aligned}
\end{equation}

The constraints of the above least squares version model are equations, so by substituting $\xi$ and $\psi$ into the objective function, and via Definition \ref{def: definitios} and letting
\begin{equation*}
    \begin{aligned}
S_A & 
=[r_1,r_2,\dots, r_{|\mathcal{I}_1|}],  
\\
S_B & 
=[\bar r_1, \bar r_2, \dots, \bar r_{|\mathcal{I}_2|}],
\\
S_{\tilde B} & =
[\tilde r_1, \tilde r_2, \dots, \tilde r_{|\mathcal{I}_2|}],
\\
S_{U}&=[r_1, r_2, \dots, r_j,  \dots,  r_{r}],
\\
S_{\hat U}&=[\hat r_1,\hat r_2,  \dots, \hat r_j, \dots, \hat r_{g}],
    \end{aligned}
\end{equation*}    
the problem for the (first or) minority class can be reformulated as 
\begin{equation}
    \label{lsqReform2}  \tag{Im-LS-$\mathfrak{U}$-QTSVM-1}
\begin{aligned}
   \min_{z_1,c_1}\
 & f=\dfrac{1}{2}\|S_A^Tz_1+c_1e\|_2^2
  +\frac{C_1}{2}\big \| e +{S^{T}_{\tilde B}}z_1+c_1e\big\|^{2}_2
  \notag \\
  &\qquad +\frac{C_{\hat u}}{2} \big \| (-1+\varepsilon) e -({S^{T}_{\hat U}}z_1+c_1e)\big \|^{2}_2+ \frac{\lambda_1}{2} \|Vz_1 \|^{2}_{2}.
\end{aligned}
\end{equation}

Here, $e$ is a vector of ones of appropriate size and $V$ was introduced in Def.~\ref{def: definitios}. 
This problem can be solved by putting the gradient with respect to $z$ and $c$ equal to zero, so we have the following equations: 
\begin{align}\label{eqImLsUQTSVMpartialZ}
\dfrac{\partial f}{\partial z}&=S_A(S_A^Tz_1+c_1e)
+C_1 {S_{\tilde B}}(e+{S^{T}_{\tilde B}}z_1+c_1e)
\\
&~~~
-C_{\hat u} {S^{T}_{\hat U}}((-1+\varepsilon) e-
({S^{T}_{\hat U}}z_1+c_1e))+\lambda_1 V^{T}Vz_1=0,\\
\dfrac{\partial f}{\partial c}&=
e^T(S_A^Tz_1+c_1e)+
C_1e^{T}(e+{S^{T}_{\tilde B}}z_1+c_1e)
-C_{\hat u} e^{T}((-1+\varepsilon) e-({S^{T}_{\hat U}}z_1+c_1e))=0.
\end{align}
By integrating the above equations, we have the following system:
\begin{align}
&
\left[ 
\begin{matrix}
S_AS_A^T  + C_1 {S_{\tilde B}}{S^{T}_{\tilde B}} +C_{\hat u}  {S_{\hat U}}{S^{T}_{\hat U}}+ \lambda_1 V^{T}V & Ae+C_1 {S_{\tilde B}}e + C_{\hat u}  {S_{\hat U}}e \\
e^T{S^T_A}+C_1 e^{T}{S^{T}_{\tilde B}} + C_{\hat u}  e^{T}{S^{T}_{\hat U}} & C_1 e^{T}e + C_{\hat u}  e^{T}e+|\mathcal{I}_1|
\end{matrix}\right] \left[ \begin{matrix}
z_1\\
c_1
\end{matrix}\right] \nonumber 
\vspace{0.9cm}
\\ 
&  = \left[ \begin{matrix}
-C_1 {S_{\tilde B}}e+(-1+\varepsilon)C_{\hat u} {S_{\hat U}} e\\
-C_1 e^{T}e+(-1+\varepsilon)C_{\hat u} e^{T} e
\end{matrix}\right]. \nonumber
\end{align}
Let 
$$\Sigma = \left[ \begin{matrix}
S_AS_A^T + C_1 {S_{\tilde B}}{S^{T} _{\tilde B}}+C_{\hat u} {S_{\hat U}}{S^{T}_{\hat U}}+ \lambda_1 V^{T}V & Ae+ C_1 {S_{\tilde B}}e + C_{\hat u} {S_{\hat U}}e \\
e^TA^T+C_1 e^{T}{S^{T}_{\tilde B}} + C_{\hat u}  e^{T}{S^{T}_{\hat U}} & C_1 e^{T}e + C_{\hat u} e^{T}e+|\mathcal{I}_1|
\end{matrix}\right],
$$
and assume it is invertible. Let 
$$\beta =\left[ \begin{matrix}
-C_1 {S_{\tilde B}}e+(-1+\varepsilon) C_{\hat u}{S_{\hat u}}e\\
-C_1 e^{T}e+(-1+\varepsilon) C_{\hat u} e^{T}e
\end{matrix}\right],
$$
then 
\begin{align*}
\left[ \begin{matrix}
z_1\\
c_1
\end{matrix}\right]= \Sigma^{-1}\beta.
\end{align*}
Similarly, the problem corresponding to the majority class in the least squares loss function becomes:
\begin{equation}
    \label{lsqReform22} \tag{Im-LS-$\mathfrak{U}$-QTSVM-2}
\begin{aligned}
   \min_{z_2,c_2}\
  &\dfrac{1}{2}\|S^T_Bz_2+c_2e\|_2^2+\frac{C_2}{2} \big \| e -(S_A^{T}z_2+c_2e)\big\|^{2}_2 \notag\\
  &
  +\frac{C_u}{2} \big \| (1-\varepsilon) e -(S_U^{T}z_2+c_2e)\big \|^{2}_2+\frac{\lambda_2}{2} \|Vz_2 \|^{2}_{2}, 
\end{aligned}
\end{equation}
which, by the same methodology, leads to
$$\tilde\Sigma = \left[ \begin{matrix}
S_BS^T_B + C_2 S_AS^T_A +C_u  {S_US^T_U}+ \lambda_2 V^{T}V & S_Be+C_2 S_Ae + C_u  S_Ue \\
e^TS_B^T+C_2 e^{T}S_A^{T} + C_u  e^{T}S_U^{T} & C_2 e^{T}e + C_u  e^{T}e+|\mathcal{I}_2|
\end{matrix}\right],$$
and assume it is invertible. Let 
$$\tilde\beta =\left[ \begin{matrix}
C_2S_BS_Ae+(1-\varepsilon) C_u S_BS_Ue\\
C_2 e^{T}e+(1-\varepsilon)C_u e^{T} e
\end{matrix}\right],$$
then 
\begin{align*}
\left[ \begin{matrix}
z_2\\
c_2
\end{matrix}\right]= \tilde\Sigma^{-1}\tilde\beta.
\end{align*}

A new data point $x \in \mathbb{R}^{n}$ is assigned to class $i \in \{+1, -1\}$ using a rule similar to that of the QTSVM.

{ \begin{remark}  For the least squares variant, the closed-form solution guarantees global optimality. For the QP variant, standard convex optimization theory ensures convergence to a global optimum given the convexity of the objective.
\end{remark}}
{ 
\begin{algorithm}[H]
\caption{  Imbalanced Universum Quadratic Twin SVM (Im-$\mathfrak{U}$-QTSVM)}
\textbf{Input:} 
\begin{itemize}
    \item Training sets $A$ (minority class), $B$ (majority class)  
    \item Regularization and penalty parameters $C_1, C_2, C_u, C_{\hat u}, \lambda_1, \lambda_2$  
    \item Universum parameter $\varepsilon$
\end{itemize}
\textbf{Output:} Quadratic surfaces $(W_1, b_1, c_1)$ for the minority class and $(W_2, b_2, c_2)$ for the majority class

\begin{enumerate}
    \item Randomly undersample the majority class $B$ to obtain a balanced subset $\tilde B$ of size $|\mathcal{I}_1|$.
    
    \item Construct Universum points:
    \begin{itemize}
        \item $U = \{u_1, \dots, u_r\}$, where $r = |\mathcal{I}_2| - |\mathcal{I}_1|$, representing Universum points for the majority class.  
        \item $\hat U = \{\hat u_1, \dots, \hat u_g\}$, where $g = \lceil |\mathcal{I}_1|/2 \rceil$, representing Universum points for the minority class.
    \end{itemize}
         
\item Solve the least-squares optimization problems \eqref{im-ls-u-q-tsvm} by solving the corresponding systems of linear equations for each class, and obtain the classifier parameters:
$$(W_1, b_1, c_1) \quad \text{for the minority class, and} \quad (W_2, b_2, c_2) \quad \text{for the majority class}.$$
    
    \item Classify a new sample $x \in \mathbb{R}^n$ by computing the distances to both quadratic surfaces:

    \begin{equation*}
        \text{Class } k = \arg\min_{i=1,2} \frac{\big|\frac{1}{2} x^T W_i x + b_i^T x + c_i \big|}{\|W_i x + b_i\|_2^2}.
    \end{equation*}

\end{enumerate}
\end{algorithm}
}

\section{Theoretical Properties of Proposed Models} \label{sec: theoretical_properties}

In this section, we present some theoretical properties of the proposed model. We first show that optimality is always ensured. Moreover, under very mild assumptions, the optimal solution is unique.

\begin{theorem}\label{thmOptSolEx}
The problem \eqref{im-ls-u-q-tsvm} always has an optimal solution.
\end{theorem}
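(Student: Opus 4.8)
The plan is to reduce each of the two subproblems to an unconstrained linear least-squares problem and then invoke the fact that such a problem always attains its minimum, regardless of whether the associated Hessian is invertible.

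First I would eliminate the equality constraints. Since the constraints of \eqref{im-ls-u-q-tsvm} define each slack variable explicitly (${\xi_1}_i,{\psi_1}_j$ for the minority class and ${\xi_2}_i,{\psi_2}_j$ for the majority class) as affine functions of the remaining variables, substituting them into the objectives yields exactly the unconstrained reformulations \eqref{lsqReform2} and \eqref{lsqReform22}. Each objective is then a sum of weighted squared Euclidean norms of affine functions of the pair $(z_k,c_k)$, plus the regularization term $\tfrac{\lambda_k}{2}\|Vz_k\|_2^2$; all the weights $\tfrac12,\tfrac{C_k}{2},\tfrac{C_{\hat u}}{2}$ (or $\tfrac{C_u}{2}$), $\tfrac{\lambda_k}{2}$ are strictly positive.

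Next I would collect these terms into a single residual. Writing $\zeta_k=[z_k;c_k]$ and absorbing the square roots of the positive weights into the rows, each reformulated objective takes the form $\tfrac12\|M_k\zeta_k-d_k\|_2^2$ for a suitable matrix $M_k$ and vector $d_k$ assembled from $S_A,S_{\tilde B},S_{\hat U},V$ (resp.\ $S_B,S_A,S_U,V$) and the constants $-1+\varepsilon$ (resp.\ $1-\varepsilon$). Finally I would appeal to the elementary fact that $\min_{\zeta}\|M\zeta-d\|_2^2$ always has a minimizer: the objective is bounded below by $0$ and equals the squared distance from $d$ to the closed subspace $\operatorname{range}(M)$, which is realized at the orthogonal projection of $d$ onto $\operatorname{range}(M)$, so any $\zeta$ with $M\zeta$ equal to that projection is optimal. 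Equivalently, the stationarity (normal) equations $M_k^TM_k\zeta_k=M_k^Td_k$—which are precisely the systems $\Sigma\,[z_1;c_1]=\beta$ and $\tilde\Sigma\,[z_2;c_2]=\tilde\beta$ derived above—are always consistent, because $M_k^Td_k\in\operatorname{range}(M_k^T)=\operatorname{range}(M_k^TM_k)$. Since both subproblems attain their minima, \eqref{im-ls-u-q-tsvm} has an optimal solution.

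The main (and essentially only) subtlety to watch is that the theorem makes no invertibility assumption on $\Sigma$ or $\tilde\Sigma$, so the closed-form $\Sigma^{-1}\beta$ is not available in general and a naive ``set the gradient to zero and solve'' argument is incomplete. The least-squares/projection viewpoint is exactly what circumvents this, since the normal equations remain consistent even when $M_k^TM_k$ is singular; coercivity and strict convexity are not needed, only that the range of a matrix is a closed subspace of a finite-dimensional space.
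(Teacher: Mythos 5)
Your proof is correct and follows essentially the same route as the paper, whose entire argument is the observation that the problem minimizes a convex quadratic function on an affine subspace and hence possesses an optimum. If anything, your version is more complete: a convex quadratic on an affine set need not attain its infimum in general (linear objectives are convex quadratics), and it is precisely your observation that the objective is a sum of squares of affine functions---so that the normal equations $\Sigma\,[z_1;c_1]=\beta$ and $\tilde\Sigma\,[z_2;c_2]=\tilde\beta$ are consistent even when $\Sigma$ or $\tilde\Sigma$ is singular---that supplies the boundedness-below needed to make the paper's one-line argument rigorous.
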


\begin{proof}
The problem minimizes a convex quadratic function on an affine subspace, so it possesses an optimum.    
\end{proof}

\begin{theorem}
Problem \eqref{im-ls-u-q-tsvm} has a unique optimal solution if and only if the set of vectors 
$\{x_i,\,i\in \mathcal{I}_1;\;\tilde{x}_i,\,i=1,\dots, |\mathcal{I}_1| ;\;
\hat{u}_j,\, j=1,\dots,g\}$ 
is affinely independent.
\end{theorem}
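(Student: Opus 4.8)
The plan is to build on the reformulation \eqref{lsqReform2}: once the equality constraints are substituted, the minority--class subproblem is the unconstrained minimization of the convex quadratic $f(z_1,c_1)$ whose Hessian is the matrix $\Sigma$ displayed above. Writing $\zeta := [z_1;c_1]$, we have $f(\zeta) = \tfrac12\zeta^T\Sigma\zeta - \beta^T\zeta + \mathrm{const}$ with $\Sigma\succeq 0$. By Theorem~\ref{thmOptSolEx} the set of minimizers is nonempty, and for a convex quadratic it is exactly the coset $\zeta^{*}+\ker\Sigma$; hence the optimum is unique if and only if $\ker\Sigma=\{0\}$. The entire claim therefore reduces to characterizing when $\Sigma$ is nonsingular, and the majority--class block $\tilde\Sigma$ is treated by the identical argument on its own point set.

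Next I would expose $\Sigma$ as a sum of positive semidefinite pieces. For a point $v$ write its lifted vector $\tilde r(v):=[\tfrac12\hv(vv^T);\,v;\,1]$, so that by Definition~\ref{def: definitios} one has $z_1^Tr(v)+c_1=\zeta^T\tilde r(v)$. Then
\[
\zeta^T\Sigma\zeta=\sum_{i\in\mathcal{I}_1}\big(\zeta^T\tilde r(x_i)\big)^2+C_1\sum_{i=1}^{|\mathcal{I}_1|}\big(\zeta^T\tilde r(\tilde x_i)\big)^2+C_{\hat u}\sum_{j=1}^{g}\big(\zeta^T\tilde r(\hat u_j)\big)^2+\lambda_1\|Vz_1\|_2^2 .
\]
Because each summand is nonnegative, $\zeta\in\ker\Sigma$ holds exactly when all of them vanish. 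The last term forces $Vz_1=\hv(W_1)=0$, i.e.\ $W_1=0$; the remaining terms then force $b_1^Tv+c_1=0$ at every $v$ in the combined set $\{x_i\}\cup\{\tilde x_i\}\cup\{\hat u_j\}$.

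The crux is the final translation, and I expect it to be the main obstacle. With $W_1=0$ the membership $\zeta\in\ker\Sigma$ is equivalent to $[b_1;c_1]^T[v;1]=0$ for every $v$ in the combined set, i.e.\ to the existence of a nontrivial affine functional vanishing on all of these points. Such a functional fails to exist---forcing $b_1=0$, $c_1=0$ and hence $\zeta=0$---precisely under the affine independence hypothesis on $\{x_i,\ i\in\mathcal{I}_1;\ \tilde x_i,\ i=1,\dots,|\mathcal{I}_1|;\ \hat u_j,\ j=1,\dots,g\}$. Establishing this equivalence cleanly, and in particular verifying that the $\ell_2$ regularization is exactly what collapses the quadratic part so that the surviving relation is genuinely affine, is where the argument must be carried out with care; the remainder is the textbook fact that a convex quadratic has a unique minimizer iff its Hessian is nonsingular.
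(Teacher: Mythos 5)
Your argument is essentially the paper's own proof: the paper works on the original constrained form, observes that non-uniqueness is equivalent to a nontrivial direction on which the quadratic part of the objective vanishes, deduces $W_1=0$ (together with $\xi_1=0$, $\psi_1=0$) from the vanishing of the squared and regularization terms, and is then left with the affine conditions $b_1^T v + c_1 = 0$ over the combined point set, which it identifies with linear dependence of the augmented vectors $(v;1)$, i.e.\ affine dependence --- your kernel-of-$\Sigma$ computation on the reduced problem \eqref{lsqReform2} is the same argument in equivalent form. The final translation you flag as the crux (nontrivial annihilating affine functional $\Leftrightarrow$ affine dependence of the points) is asserted in the paper in exactly the same single line, so your treatment matches the authors' step for step.
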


\begin{proof}
From Theorem~\ref{thmOptSolEx}, an optimum always exists. Recall that the problem has the form of minimization of a convex quadratic function on an affine subspace. Thus, the minimum is not unique if and only if the objective function is constant in the unbounded direction. This happens if and only if there is a nontrivial solution of  
\begin{align*}
\frac{1}{2} \tilde{x}_i^T  W_1  \tilde{x}_i + b_1^T\tilde{x}_i + c_1 
 &= {\xi_{1}}_i, \quad i=1,\dots, |\mathcal{I}_1| \\
-(\frac{1}{2} \hat{u}_j^T   W_1\hat{u}_j 
 + b_1^T\hat{u}_j+ c_1 ) 
 &=  {\psi_{1}}_j,\quad  j=1,\dots,g,
\end{align*}
for which the objective function vanishes. If the objective function is zero, then we can deduce that 
$\xi_{1}=0$, $\psi_{1}=0$ and $W_1=0$. Hence, the remaining conditions take the form of
\begin{align*}
b_1^T\tilde{x}_i + c_1  &= 0, \quad i=1,\dots, |\mathcal{I}_1| \\
b_1^T\hat{u}_j+ c_1 &= 0,\quad  j=1,\dots,g,\\
b_1^T{x}_i + c_1  &= 0, \quad \forall i\in \mathcal{I}_1 .
\end{align*}
This characterizes the linear dependence of vectors 
$$
\{(x_i;1),\,i\in \mathcal{I}_1;\;
 (\tilde{x}_i;1),\,i=1,\dots, |\mathcal{I}_1| ;\;
 (\hat{u}_j;1),\, j=1,\dots,g\},
$$ 
or, equivalently, the affine dependence of the vectors in the formulation of the theorem.
\end{proof}

Naturally, the situation in which $z^*=0$ for an optimal solution $(z^*,c^*)$ is not desirable. Thus, we present a sufficient condition that ensures the optimal solution is nonzero. This condition assumes that three specific vectors in space $\R^{\frac{n(n+1)}{2}}$ are linearly independent, which is highly likely to be satisfied.

\begin{proposition}
Let $(z^*,c^*)$ be an optimal solution of \eqref{lsqReform2}. 
If vectors $S_Ae$, $S_{\tilde B}e$, and $S_{\hat U}e$ are linearly independent,  
then $z^*\not=0$.
\end{proposition}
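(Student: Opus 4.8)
The plan is to argue by contradiction, exploiting the first-order stationarity conditions already derived for \eqref{lsqReform2}. Since \eqref{lsqReform2} is the unconstrained minimization of a differentiable, convex quadratic function of $(z_1,c_1)$, every optimal solution $(z^*,c^*)$ necessarily satisfies $\partial f/\partial z = 0$, which is exactly the relation recorded in \eqref{eqImLsUQTSVMpartialZ}. I would therefore suppose, toward a contradiction, that $z^*=0$ and substitute this into that gradient equation; the point of the argument is that killing $z_1$ collapses the stationarity condition into a single linear relation among the three vectors named in the hypothesis.

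Concretely, setting $z_1=0$ (and $c_1=c^*$) in \eqref{eqImLsUQTSVMpartialZ} makes the regularization term $\lambda_1 V^TVz_1$ and every term containing $z_1$ vanish, so the equation reduces to
\[
c^*\,S_Ae \;+\; C_1\,(1+c^*)\,S_{\tilde B}e \;+\; C_{\hat u}\,(1-\varepsilon+c^*)\,S_{\hat U}e \;=\; 0 .
\]
This is the crux of the proof: the constant $+e$ sitting inside the $S_{\tilde B}$ term is exactly what survives the substitution and will eventually drive the contradiction.

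Now I would invoke the hypothesis that $S_Ae$, $S_{\tilde B}e$, and $S_{\hat U}e$ are linearly independent, which forces each of the three scalar coefficients above to vanish. The coefficient of $S_Ae$ gives $c^*=0$; inserting $c^*=0$ into the coefficient of $S_{\tilde B}e$ yields $C_1(1+0)=C_1=0$, contradicting the standing assumption $C_1>0$. Hence the supposition $z^*=0$ is untenable, so $z^*\neq 0$, as claimed.

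I do not expect a serious obstacle here, since the argument is essentially a single substitution. The only points requiring care are (i) reading off the coefficients correctly after setting $z_1=0$, in particular verifying that the additive $e$ inside the $S_{\tilde B}$ block produces the pure constant $C_1$ that clashes with $C_1>0$, and (ii) observing that the companion equation $\partial f/\partial c=0$ is not even needed: the $z$-gradient alone, together with positivity of $C_1$ and the independence hypothesis, already closes the argument.
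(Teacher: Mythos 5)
Your proposal is correct and follows essentially the same route as the paper: substitute $z^*=0$ into the stationarity condition $\partial f/\partial z=0$ from \eqref{eqImLsUQTSVMpartialZ} to obtain the relation $c^*S_Ae+C_1(1+c^*)S_{\tilde B}e+C_{\hat u}(1-\varepsilon+c^*)S_{\hat U}e=0$ and contradict the linear independence hypothesis. If anything, your version is slightly more careful than the paper's, since you explicitly verify that the coefficients cannot all vanish (via $c^*=0\Rightarrow C_1=0$), whereas the paper asserts linear dependence directly, which tacitly relies on the same observation.
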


\begin{proof}
Suppose, on the contrary, $z^*=0$. From equations \eqref{eqImLsUQTSVMpartialZ} we obtain that
\begin{align*}
c_1S_Ae + C_1(1+c_1) {S_{\tilde B}}e
-C_{\hat u}(-1+\varepsilon-c_1) {S^{T}_{\hat U}}e=0.    
\end{align*}
This means that vectors $S_Ae$, $S_{\tilde B}e$, and $S_{\hat U}e$ are linearly dependent; a contradiction.
\end{proof}

{ \begin{remark}[Computational Complexity]
The model (Im-$\mathfrak{U}$-QTSVM) is based on solving two convex quadratic optimization problems. For the  class $i\in\{1,2\}$, it uses $\mathcal{O}(n^2)$ variables $W_i, b_i, c_i$  and $\mathcal{O}(|\mathcal{I}_i|)$ slack variables $\xi_i,\psi_i$. Employing a suitable interior-point method, the resulting complexity is $\mathcal{O}((n^2 + |\mathcal{I}_i|)^3)$ for class~$i$. 

For the model (Im-LS-$\mathfrak{U}$-QTSVM), the least squares formulation allows a closed-form solution via two linear systems of equations $\Sigma z = \beta$ and $\tilde{\Sigma} z = \tilde{\beta}$. The constraint matrices require some calculations. The matrix $S_A$ has a size of order $n^2\times |\mathcal{I}_1|$, whence the matrix $S_AS_A^T$ is constructed in time $\mathcal{O}(n^4|\mathcal{I}_1|)$. Similarly, the construction of matrices $S_{\tilde{B}}S_{\tilde{B}}^T$ and $S_{\hat{U}}S_{\hat{U}}^T$ needs $\mathcal{O}(n^4|\mathcal{I}_1|)$ and $\mathcal{O}(n^4g)$, respectively. Since the value of $g$ is of order $\mathcal{O}(|\mathcal{I}_1|)$, the construction of matrix $\Sigma$ requires a total time of $\mathcal{O}(n^4|\mathcal{I}_1|)$. The matrix $\Sigma$ has a size of $n^2\times n^2$, so solving the system of linear equations by the standard methods requires time $\mathcal{O}(n^6)$. Therefore, the total computational time for the first system of equations is $\mathcal{O}(n^4(n^2+|\mathcal{I}_1|))$. For the second system of equations, we obtain the time complexity $\mathcal{O}(n^4(n^2+|\mathcal{I}_2|))$ analogously. 

Hence, the least squares variant is more efficient for large datasets, in particular when $|\mathcal{I}_i|\gg n^2$.
\end{remark}}

\section{Numerical Experiments}
\label{Sec: Numerical_Experiments}
{\color{black}
In this section, a large scale of numerical experiments are conducted to validate the performance of the proposed Im-LS-$\mathfrak{U}$-QTSVM model for binary classification. We first introduce the experimental settings, including the data sources, the parameter setups, etc. Then, all the numerical experiments are conducted to validate the classification accuracy of the proposed model, along with some related binary classification models on public benchmark datasets. Certain statistical tests are also conducted to analyze the computational results.
}

\subsection{Experiment Settings}
\label{subsection: effect of parameters and Universum data}

{\color{black}

In the numerical experiments, multiple benchmark SVM models are implemented for comparison, including the linear SVM, linear Twin SVM, and least squares twin SVM models. We also tested their corresponding Universum variants. All the tested models and their abbreviations are listed in Table \ref{table: models & abbreviations for experiments}. In addition, we list the Python toolboxes (commercial solvers, packages, etc.) in Table \ref{table: models & abbreviations for experiments}. All experiments are conducted on a MacBook Pro with an Apple M3 Pro chip and 18 GB of unified memory.

\begin{table}[h]
\centering
\caption{\textcolor{black}{Models implemented for the numerical experiments}}
\label{table: models & abbreviations for experiments}
\begin{tabular}{l l l l} 
\hline
Model & Abbreviation &  Toolbox & Parameters\\ 
\hline
SVM & SVM &  Scikit-learn &  $C$ \\
Cost-sensitive SVM & CSSVM & Scikit-learn & $C$ \\
Twin SVM & TSVM & CVXOPT &  $C_{1}, C_{2}$ \\
Least Squares Twin SVM & LS-TSVM & Numpy &  $C_{1}, C_{2}$ \\
Quadratic Least Squares Twin SVM & LS-QTSVM & 
Numpy &  $C_{1}, C_{2}$ \\
Universum Twin SVM & $\mathfrak{U}$-TSVM & 
CVXOPT &  $C_{1}, C_{2}, C_u, \varepsilon$ \\
Least Squares Universum Twin SVM & LS-$\mathfrak{U}$-TSVM & 
Numpy &  $C_{1}, C_{2}, C_u, \varepsilon$ \\
Universum Quadratic Twin SVM & $\mathfrak{U}$-QTSVM & 
CVXOPT &  $C_{1}, C_{2}, C_u, \varepsilon$ \\
\makecell[l]{Least Squares Universum Quadratic \\ Twin SVM} & LS-$\mathfrak{U}$-QTSVM & 
Numpy &  $C_{1}, C_{2}, C_u, \varepsilon$ \\
\makecell[l]{Universum Weighted Kernel \\ Extreme Learning Machine} & URKWELM & Numpy & $C, C_u, \gamma$ \\
\makecell[l]{Intuitionistic Fuzzy Twin \\ Proximal SVM} & FHTPSVM & Numpy &  \makecell[l]{$M_1, M_2, M_3, M_4,$ \\ $\gamma_1, \gamma_2, I_w$} \\
\makecell[l]{Imbalanced Least Squares Universum \\ Quadratic Twin SVM} & Im-LS-$\mathfrak{U}$-QTSVM & 
Numpy &  $C_{1}, C_{2}, C_u, \lambda, \varepsilon$ \\
\hline
\end{tabular}
\end{table}

For each experiment, a five-fold cross-validation procedure is applied. Each experiment is repeated ten times for each tested model to keep the results statistically meaningful. Means and standard deviations of the accuracy scores are recorded as the measurement of classification effectiveness. 
{ \paragraph{Universum Data Generation.} 
To construct Universum data, we adopt the averaging technique described in~\cite{moosaei2023universum}. 
The procedure is detailed as follows:

\begin{enumerate}
    \item Randomly select $10\%$ of samples from each class $A$ and $B$.
    \item Pair the selected samples between the two classes.
    \item For each pair $(x_a, x_b)$, compute a Universum point
    \[
        u = \tfrac{1}{2}(x_a + x_b).
    \]
    \item Collect all such $u$ as the Universum dataset $U$.
\end{enumerate}}

All the parameters for each model are tuned by applying the grid-search, which is commonly adopted for tuning SVM parameters \cite{gao2021novel,luo2016soft}. For each pair of the twin models in \ref{tsvm}, \ref{ls-tsvm}, \ref{LS-QTSVM},  \ref{utsvm}, \ref{ls-u-tsvm},  \ref{UQTSVM}, and \ref{ls-UQTSVM}, the penalty parameters $C_{1}$ and $C_{2}$ are selected to be the identical. {\color{black} In model FHTPSVM, we set $M_1 = M_2$ and $M_3 = M_4$, and $\gamma_1 = \gamma_2 = I_w = 0.5$. In model FHTPSVM, we set kernel parameter $\gamma$ to be the reciprocal of the feature amount. The ranges of the rest of  tuning parameters $C$, $C_u$, $C_1$, $C_2$, $M_1$, $M_3$, $\lambda$ and $\varepsilon$ are searched in grid $\{2^{-8}, 2^{-7}, \dots, 2^{8}\}$.

}

We first generate four artificial datasets to show the flexibility of the proposed Im-LS-$\mathfrak{U}$-QTSVM model. The surfaces produced by the proposed model are visualized in Figure \ref{figure: arti-data results Im-LS-U-QTSVM}, in which Figures \ref{figure: arti-1-rate3} and  \ref{figure: arti-1-rate10} have the similar nonlinear pattern while Figures \ref{figure: arti-2-rate3} and \ref{figure: arti-2-rate10} have the other similar one. To show the proposed model on different levels of imbalanced data, we set the imbalanced rates in Figure \ref{figure: arti-1-rate3} and \ref{figure: arti-2-rate3} to be three while the imbalanced rates in Figure \ref{figure: arti-1-rate10} and \ref{figure: arti-2-rate10} to be ten. The major class of data is in color green, while the minor class is in red. For the major class, the number of solid points is the same as the minor class.

\begin{figure}[h]
    \centering
    \begin{subfigure}[b]{0.46\textwidth}
        \includegraphics[width=\textwidth]{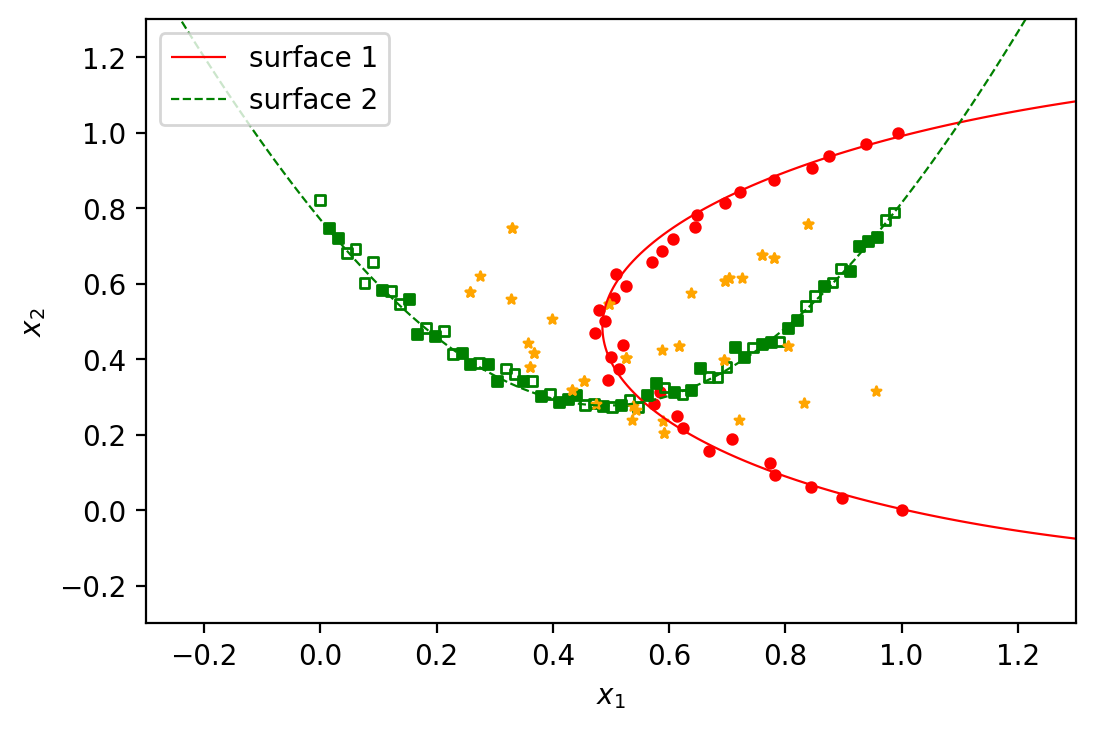}
        \caption{Arti-1, rate 1:3}
        \label{figure: arti-1-rate3}
    \end{subfigure}
    ~ 
    \begin{subfigure}[b]{0.46\textwidth}
        \includegraphics[width=\textwidth]{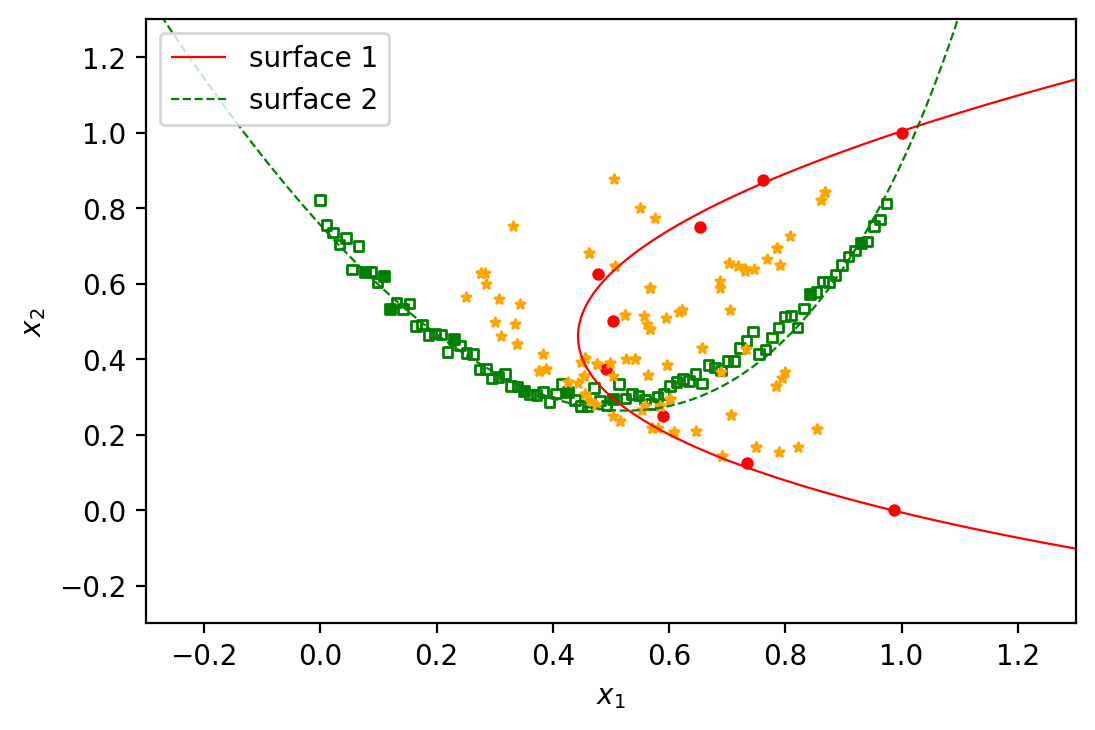}
        \caption{Arti-1, rate 1:10}
        \label{figure: arti-1-rate10}
    \end{subfigure}

    \begin{subfigure}[b]{0.46\textwidth}
        \includegraphics[width=\textwidth]{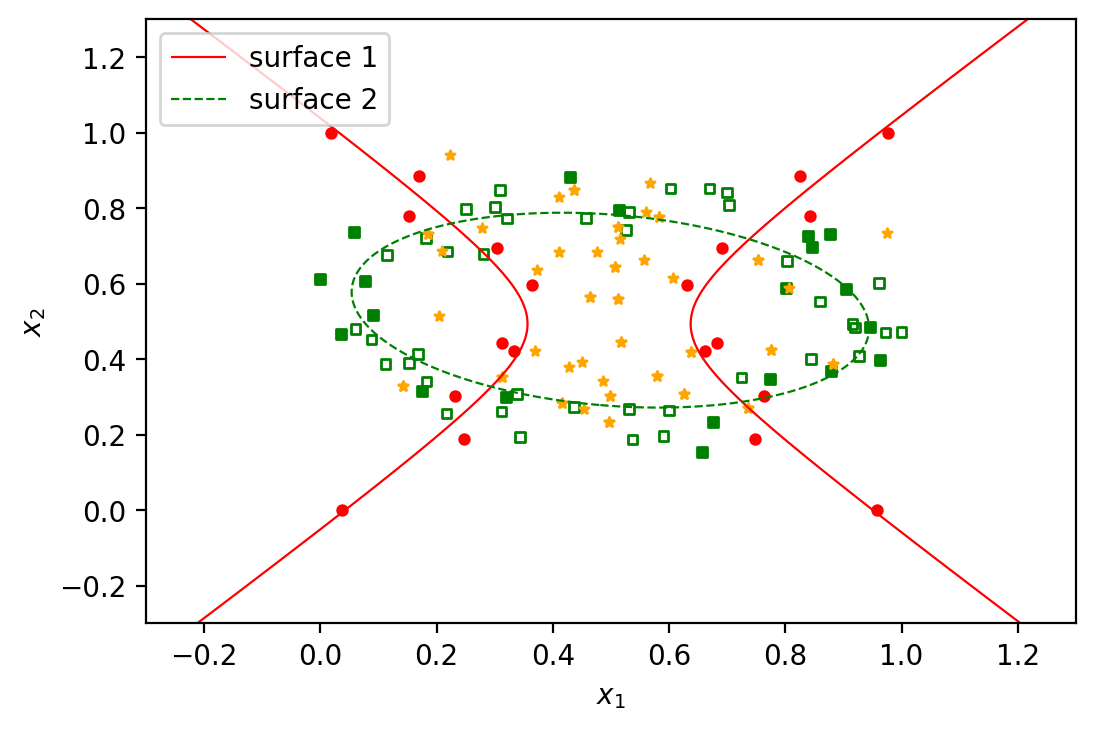}
        \caption{Arti-2, rate 1:3}
        \label{figure: arti-2-rate3}
    \end{subfigure}
    ~ 
    \begin{subfigure}[b]{0.46\textwidth}
        \includegraphics[width=\textwidth]{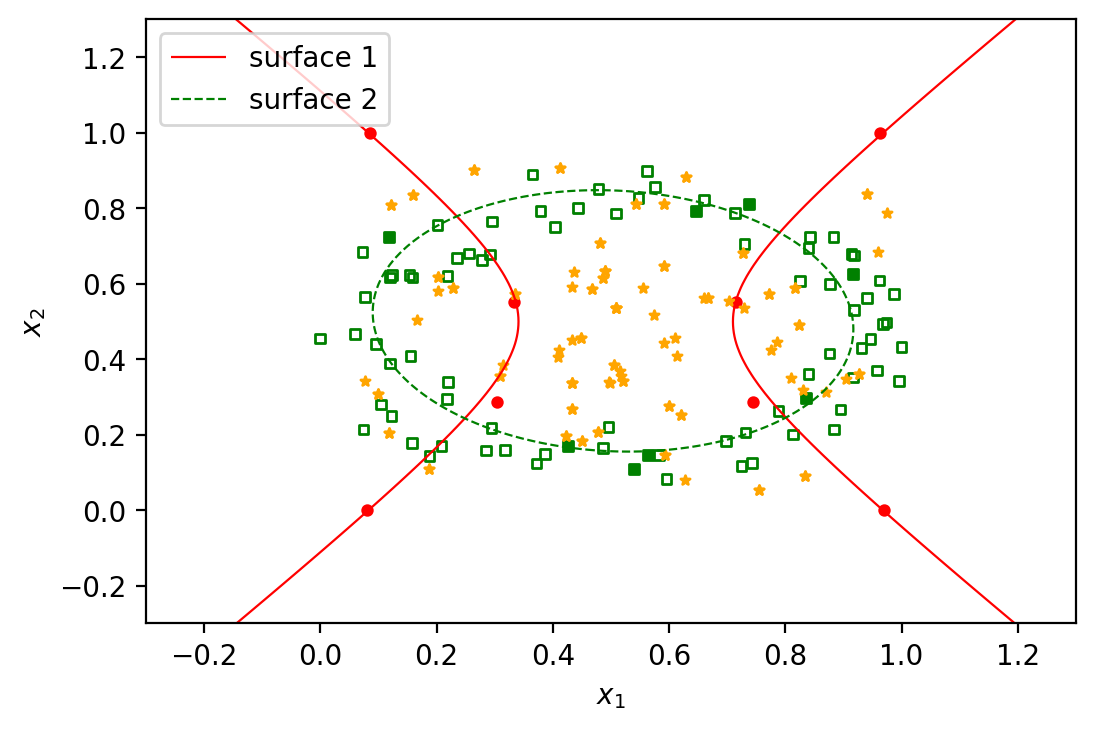}
        \caption{Arti-2, rate 1:10}
        \label{figure: arti-2-rate10}
    \end{subfigure}

    \caption{Im-LS-U-QTSVM on artificial data sets}\label{figure: arti-data results Im-LS-U-QTSVM}
\end{figure}

From the results in Figure \ref{figure: arti-data results Im-LS-U-QTSVM} we notice that the proposed Im-LS-$\mathfrak{U}$-QTSVM model can fit the quadratic patterns. It also provides promising classification accuracy when the dataset is imbalanced under different levels. 
\subsection{Results on Public Benchmark Data}
\label{subsection: numerical section public benchmark datasets}

The proposed Im-LS-$\mathfrak{U}$-QTSVM model is tested on some public benchmark data sets. All the data information is listed in Table \ref{table: pubic benchmark data info}

\begin{table}[t]
  \centering
    \begin{tabular}{l l l l}
  \hline Data set & \# of data points (Class 1/Class 2) & \# of features & Imbalance rate \\
  \hline
pima  &  768  (268/500) &  8  &  1.87  \\
blood transfusion  &  748  (178/570) &  4  &  3.20  \\
seismic bumps  &  2584  (170/2414) &  15  &  14.20  \\
haberman  &  306  (81/225) &  3  &  2.78  \\
wholesale  &  440  (142/298) &  7  &  2.10  \\
liver disorder  &  319  (134/185) &  6  &  1.38  \\
wine  &  130  (59/71) &  13  &  1.20  \\
balance  &  576  (288/288) &  4  &  1.00  \\
taxpayer  &  667  (331/336) &  9  &  1.02  \\
car evaluation  &  1594  (384/1210) &  6  &  3.15  \\
CTG  &  2126  (471/1655) &  22  &  3.51  \\
US heart  &  3658  (557/3101) &  15  &  5.57  \\
Alzheimer rowdel  &  354  (37/317) &  9  &  8.57  \\
Alzheimer clinical  &  1200  (373/827) &  5  &  2.22  \\
thyroid  &  215  (65/150) &  5  &  2.31  \\
ecoli 1  &  336  (143/193) &  7  &  1.35  \\
ecoli 123  &  336  (20/316) &  7  &  4.25  \\
ecoli 4  &  336  (64/272) &  7  &  8.60  \\
ecoli 1234  &  336  (9/327) &  7  &  10.59  \\
ecoli 5  &  336  (29/307) &  7  &  15.80  \\
ecoli 678  &  336  (35/301) &  7  &  36.33  \\
    \hline
    \end{tabular}
    
  \caption{Basic information of public benchmark data sets.\protect\footnote{The sources of data sets can be found here: \url{https://github.com/tonygaobasketball/Sparse-UQSSVM-Models-for-Binary-Classification}}}
  \label{table: pubic benchmark data info}
\end{table}  


\begin{sidewaystable} 
\centering
\resizebox{\textwidth}{!}{
\begin{tabular}{c|ccccccccBBBc}\hline

\multirow{2}{*}{Dataset} & \multicolumn{12}{c}{Accuracy score (mean $\pm$ standard deviation) (\%)}  \\
\cline{2-13}
  &  LS-$\mathfrak{U}$-QTSVM      &  LS-$\mathfrak{U}$-TSVM      &  LS-QTSVM      &  LS-TSVM      &  $\mathfrak{U}$-QTSVM      &  $\mathfrak{U}$-TSVM      &  TSVM      &  SVM   & CSSVM & URKWELM & FHTPSVM &  Im-LS-$\mathfrak{U}$-QTSVM      \\
\hline

pima  & $  77.74  \pm  2.08  $ & $  77.48  \pm  1.97  $ & $  77.48  \pm  1.18  $ & $  67.45  \pm  0.93  $ & $  77.48  \pm  1.67  $ & $  78.00  \pm  2.10  $ & $  70.19  \pm  4.18  $ & $  77.87  \pm  1.63 $ & $ 76.04 \pm 3.58 $ & $ 72.59 \pm 2.86 $ & $ 77.15 \pm 2.01  $ & $   \textbf{78.27}  \pm  3.66  $ \\
blood transfusion  & $  79.61  \pm  1.61  $ & $  79.48  \pm  1.09  $ & $  79.21  \pm  1.91  $ & $  76.61  \pm  0.88  $ & $  79.14  \pm  1.39  $ & $  78.34  \pm  1.52  $ & $  76.27  \pm  0.27  $ & $  76.27  \pm  0.27 $ &  $  75.27  \pm  6.33
$ &  $  77.00  \pm  1.17
$ &  $  78.47  \pm  1.73   $ & $  \textbf{80.42}  \pm  3.14  $ \\
seismic bumps  & $  93.32  \pm  0.09  $ & $  93.30  \pm  0.18  $ & $  92.86  \pm  0.34  $ & $  93.40  \pm  0.16  $ & $  93.11  \pm  0.31  $ & $  \textbf{93.42}  \pm  0.06  $ & $  90.71  \pm  5.19  $ & $  \textbf{93.42}  \pm  0.01  $ &  $  80.03  \pm  1.67
$ &  $  93.44  \pm  0.06
$ &  $  93.42  \pm  0.08  $ & $  \textbf{93.42}  \pm  0.01  $ \\
haberman  & $  76.96  \pm  3.31  $ & $  76.97  \pm  2.56  $ & $  75.49  \pm  2.48  $ & $  74.02  \pm  0.77  $ & $  75.49  \pm  1.49  $ & $  76.48  \pm  4.05  $ & $  73.69  \pm  0.32  $ & $  73.53  \pm  0.48 $ &  $  75.50  \pm  3.04
$ &  $  75.34  \pm  2.96
$ &  $  75.66  \pm  2.45   $ & $  \textbf{77.13}  \pm  2.65  $ \\
wholesale  & $  91.02  \pm  2.87  $ & $  91.93  \pm  2.19  $ & $  90.80  \pm  2.28  $ & $  69.20  \pm  1.36  $ & $  90.80  \pm  2.12  $ & $  91.70  \pm  2.18  $ & $  69.09  \pm  1.26  $ & $  91.59  \pm  1.96  $ &  $  91.36  \pm  2.65
$ &  $  80.80  \pm  4.37
$ &  $  90.80  \pm  1.47  $ & $  \textbf{92.05}  \pm  2.25  $ \\
liver disorder  & $  72.76  \pm  5.61  $ & $  69.61  \pm  3.65  $ & $  72.76  \pm  5.45  $ & $  59.87  \pm  2.68  $ & $  71.50  \pm  5.32  $ & $  70.39  \pm  5.06  $ & $  66.94  \pm  4.71  $ & $  69.45  \pm  4.10  $ &  $  67.26  \pm  5.84
$ &  $  58.15  \pm  0.52
$ &  $  68.35  \pm  4.48  $ & $  \textbf{73.69}  \pm  4.69  $ \\
wine  & $  78.85  \pm  3.82  $ & $  \textbf{100.00}  \pm  0.00  $ & $  76.54  \pm  7.81  $ & $  69.23  \pm  6.81  $ & $  95.38  \pm  3.68  $ & $  \textbf{100.00}  \pm  0.00  $ & $  70.38  \pm  6.73  $ & $  99.62  \pm  0.77  $ &  $  99.23  \pm  1.54
$ &  $  93.85  \pm  4.62
$ &  $  99.62  \pm  1.09  $ & $  \textbf{100.00}  \pm  0.00  $ \\
balance  & $  99.74  \pm  0.52  $ & $  95.66  \pm  1.44  $ & $  99.39  \pm  0.49  $ & $  65.01  \pm  3.22  $ & $  99.13  \pm  0.93  $ & $  95.75  \pm  0.89  $ & $  61.89  \pm  2.51  $ & $  95.48  \pm  1.49  $ &  $  95.57  \pm  2.07
$ &  $  91.23  \pm  2.74
$ &  $  95.14  \pm  0.68  $ & $  \textbf{99.78}  \pm  0.22  $ \\
taxpayer  & $  \textbf{55.40}  \pm  3.56  $ & $  53.90  \pm  3.25  $ & $  53.67  \pm  1.67  $ & $  50.37  \pm  0.24  $ & $  53.97  \pm  3.23  $ & $  53.82  \pm  2.85  $ & $  51.28  \pm  2.44  $ & $  51.28  \pm  2.57  $ &  $  51.58  \pm  3.14
$ &  $  52.63  \pm  5.12
$ &  $  52.40  \pm  3.04  $ & $  54.65  \pm  3.28  $ \\
car evaluation  & $  95.14  \pm  0.73  $ & $  86.54  \pm  1.15  $ & $  94.42  \pm  0.93  $ & $  80.40  \pm  1.18  $ & $  92.10  \pm  0.94  $ & $  86.54  \pm  1.98  $ & $  81.15  \pm  1.54  $ & $  86.20  \pm  1.56  $ &  $  81.71  \pm  1.84
$ &  $  73.62  \pm  4.65
$ &  $  86.29  \pm  1.70  $ & $  \textbf{95.26}  \pm  0.92  $ \\
CTG  & $  98.38  \pm  0.55  $ & $  96.05  \pm  0.79  $ & $  98.02  \pm  0.62  $ & $  79.42  \pm  1.70  $ & $  97.34  \pm  0.63  $ & $  96.73  \pm  0.98  $ & $  77.85  \pm  0.07  $ & $  97.06  \pm  0.93 $ &  $  96.07  \pm  0.57
$ &  $  93.41  \pm  1.06
$ &  $  96.14  \pm  0.66   $ & $  \textbf{98.42}  \pm  0.63  $ \\
US heart  & $  84.99  \pm  0.80  $ & $  \textbf{85.35}  \pm  0.46  $ & $  84.83  \pm  0.31  $ & $  84.94  \pm  0.15  $ & $  84.94  \pm  0.21  $ & $  85.27  \pm  0.41  $ & $  84.75  \pm  0.18  $ & $  84.77  \pm  0.06  $ &  $  68.85  \pm  2.07
$ &  $  82.41  \pm  1.02
$ &  $  84.90  \pm  0.24  $ & $  85.21  \pm  0.48  $ \\
Alzheimer rowdel  & $  90.40  \pm  1.19  $ & $  90.12  \pm  1.72  $ & $  89.55  \pm  0.67  $ & $  86.30  \pm  2.59  $ & $  88.71  \pm  3.46  $ & $  90.40  \pm  1.68  $ & $  75.40  \pm  9.18  $ & $  89.55  \pm  0.67  $ &  $  71.03  \pm  5.49
$ &  $  89.55  \pm  0.67
$ &  $  89.84  \pm  1.17  $ & $  \textbf{90.68}  \pm  1.38  $ \\
Alzheimer clinical  & $  77.00  \pm  2.90  $ & $  76.67  \pm  2.28  $ & $  75.92  \pm  2.53  $ & $  70.88  \pm  1.23  $ & $  76.54  \pm  3.04  $ & $  75.88  \pm  2.32  $ & $  70.42  \pm  0.49  $ & $  76.71  \pm  2.29  $ &  $  72.08  \pm  3.46
$ &  $  64.50  \pm  1.76
$ &  $  73.71  \pm  2.22  $ & $  \textbf{77.42}  \pm  2.17  $ \\
thyroid  & $  92.09  \pm  5.22  $ & $  89.77  \pm  5.22  $ & $  91.63  \pm  4.56  $ & $  75.81  \pm  4.05  $ & $  92.09  \pm  4.31  $ & $  91.16  \pm  3.42  $ & $  90.23  \pm  5.58  $ & $  88.84  \pm  4.00  $ &  $  90.93  \pm  4.67
$ &  $  86.28  \pm  4.77
$ &  $  89.07  \pm  4.11  $ & $  \textbf{95.35}  \pm  2.55  $ \\
ecoli 1  & $  97.33  \pm  1.73  $ & $  97.33  \pm  1.73  $ & $  97.33  \pm  1.73  $ & $  67.55  \pm  2.94  $ & $  96.44  \pm  2.41  $ & $  97.03  \pm  1.88  $ & $  68.74  \pm  1.09  $ & $  96.73  \pm  2.19  $ &  $  97.18  \pm  2.25
$ &  $  97.17  \pm  2.62
$ &  $  97.32  \pm  2.06  $ & $  \textbf{97.92}  \pm  1.52  $ \\
ecoli 123  & $  91.38  \pm  1.92  $ & $  90.20  \pm  3.99  $ & $  90.78  \pm  1.69  $ & $  80.95  \pm  0.58  $ & $  90.18  \pm  2.57  $ & $  89.60  \pm  3.35  $ & $  80.95  \pm  0.58  $ & $  89.29  \pm  3.29  $ &  $  87.79  \pm  4.70
$ &  $  81.10  \pm  1.48
$ &  $  89.29  \pm  2.76  $ & $  \textbf{91.97}  \pm  2.20  $ \\
ecoli 4  & $  93.15  \pm  1.22  $ & $  93.15  \pm  0.75  $ & $  91.96  \pm  1.81  $ & $  81.27  \pm  4.67  $ & $  92.55  \pm  1.65  $ & $  92.84  \pm  2.58  $ & $  90.76  \pm  2.92  $ & $  92.85  \pm  1.48  $ &  $  88.98  \pm  2.71
$ &  $  85.56  \pm  3.29
$ &  $  92.72  \pm  2.22  $ & $  \textbf{94.04}  \pm  1.35  $ \\
ecoli 1234  & $  97.91  \pm  1.52  $ & $  97.62  \pm  1.52  $ & $  97.91  \pm  0.73  $ & $  91.37  \pm  0.59  $ & $  96.73  \pm  1.46  $ & $  97.03  \pm  1.62  $ & $  91.37  \pm  0.59  $ & $  97.62  \pm  1.20  $ &  $  95.99  \pm  2.38
$ &  $  92.11  \pm  0.59
$ &  $  97.62  \pm  1.64  $ & $  \textbf{98.21}  \pm  1.46  $ \\
ecoli 5  & $  \textbf{99.41}  \pm  0.73  $ & $  98.82  \pm  2.35  $ & $  98.81  \pm  1.10  $ & $  78.30  \pm  5.17  $ & $  98.51  \pm  0.01  $ & $  99.11  \pm  1.18  $ & $  94.05  \pm  2.50  $ & $  98.81  \pm  1.10  $ &  $  96.72  \pm  1.86
$ &  $  97.17  \pm  1.69
$ &  $  98.36  \pm  1.69  $ & $  \textbf{99.41}  \pm  1.18  $ \\
ecoli 678  & $  99.11  \pm  1.19  $ & $  99.11  \pm  0.96  $ & $  98.51  \pm  1.13  $ & $  97.32  \pm  0.59  $ & $  98.81  \pm  1.28  $ & $  \textbf{99.41}  \pm  0.96  $ & $  97.32  \pm  0.59  $ & $  99.26  \pm  0.96  $ &  $  98.51  \pm  1.16
$ &  $  98.51  \pm  1.16
$ &  $  99.26  \pm  0.99  $ & $  \textbf{99.41}  \pm  0.96  $ \\
\hline
Average rank of accuracy &  3.19 & 4.38 & 5.48 & 10.57 & 6.00 & 4.33 & 10.62 & 6.67 & 8.76 & 9.19 & 6.52 &  \textbf{1.29}
\\
\hline
\end{tabular}
}
\caption{Public benchmark data accuracy results.}
\label{table: benchmark data accuracy results}
\end{sidewaystable} %


\begin{sidewaystable} 
\centering
\resizebox{\textwidth}{!}{
\begin{tabular}{c|ccccccccccccc}\hline

\multirow{2}{*}{Dataset} & \multicolumn{12}{c}{G-mean (mean $\pm$ standard deviation) (\%)}  \\
\cline{2-13}
  &  LS-$\mathfrak{U}$-QTSVM      &  LS-$\mathfrak{U}$-TSVM      &  LS-QTSVM      &  LS-TSVM      &  $\mathfrak{U}$-QTSVM      &  $\mathfrak{U}$-TSVM      &  TSVM      &  SVM   & CSSVM & URKWELM & FHTPSVM &  Im-LS-$\mathfrak{U}$-QTSVM      \\
\hline

pima   & $   74.02   \pm   2.82   $ & $   75.27   \pm   3.49   $ & $   54.78   \pm   12.50   $ & $   46.47   \pm   3.38   $ & $   72.43   \pm   3.08   $ & $   71.53   \pm   4.81   $ & $   72.39   \pm   1.68   $ & $   70.41   \pm   3.91   $ & $   74.56   \pm   3.95   $ & $   74.01   \pm   0.50   $ & $   74.01   \pm   2.06   $ & $   \textbf{75.62}   \pm   2.31     $ \\ 
blood transfusion   & $   66.41   \pm   2.38   $ & $   69.76   \pm   2.20   $ & $   54.14   \pm   6.92   $ & $   32.58   \pm   5.55   $ & $   64.61   \pm   3.53   $ & $   60.52   \pm   3.33   $ & $   67.87   \pm   1.85   $ & $   30.21   \pm   9.56   $ & $   67.69   \pm   3.07   $ & $   67.18   \pm   3.27   $ & $   67.18   \pm   3.94   $ & $   \textbf{79.68}   \pm   1.24     $ \\ 
seismic bumps   & $   55.00   \pm   9.52   $ & $   62.26   \pm   4.99   $ & $   53.73   \pm   3.35   $ & $   38.36   \pm   3.29   $ & $   49.73   \pm   5.36   $ & $   46.15   \pm   4.33   $ & $   65.63   \pm   4.57   $ & $   11.62   \pm   16.13   $ & $   \textbf{71.15}   \pm   6.16   $ & $   \textbf{71.15}   \pm   1.97   $ & $   \textbf{71.15}   \pm   0.00   $ & $   70.51   \pm   6.77     $ \\ 
haberman   & $   63.84   \pm   3.90   $ & $   66.19   \pm   4.36   $ & $   44.00   \pm   10.27   $ & $   9.80   \pm   12.00   $ & $   59.86   \pm   4.45   $ & $   43.08   \pm   10.04   $ & $   64.22   \pm   6.50   $ & $   34.98   \pm   6.43   $ & $   57.26   \pm   6.71   $ & $   57.08   \pm   17.06   $ & $   57.08   \pm   15.71   $ & $   \textbf{66.41}   \pm   3.39     $ \\ 
wholesale   & $   89.23   \pm   3.22   $ & $   90.78   \pm   3.85   $ & $   52.39   \pm   26.98   $ & $   43.38   \pm   8.60   $ & $   87.98   \pm   2.36   $ & $   90.70   \pm   3.86   $ & $   56.46   \pm   13.74   $ & $   85.17   \pm   5.20   $ & $   91.22   \pm   2.97   $ & $   91.22   \pm   4.96   $ & $   91.22   \pm   3.78   $ & $   \textbf{91.54}   \pm   3.73     $ \\ 
liver disorder   & $   71.04   \pm   8.11   $ & $   65.02   \pm   4.47   $ & $   52.14   \pm   6.39   $ & $   29.22   \pm   17.41   $ & $   70.62   \pm   6.95   $ & $   65.72   \pm   5.81   $ & $   51.66   \pm   8.09   $ & $   66.06   \pm   4.48   $ & $   66.96   \pm   3.94   $ & $   66.81   \pm   0.00   $ & $   66.81   \pm   7.04   $ & $   \textbf{72.90}   \pm   5.15     $ \\ 
wine   & $   90.15   \pm   4.80   $ & $   \textbf{100.00}   \pm   0.00   $ & $   32.57   \pm   6.24   $ & $   61.90   \pm   10.11   $ & $   96.31   \pm   2.20   $ & $   \textbf{100.00}   \pm   0.00   $ & $   62.79   \pm   17.86   $ & $   98.59   \pm   1.72   $ & $   99.27   \pm   1.46   $ & $   97.87   \pm   0.00   $ & $   97.87   \pm   3.14   $ & $   \textbf{100.00}   \pm   0.00     $ \\ 
balance   & $   99.83   \pm   0.35   $ & $   94.08   \pm   1.85   $ & $   78.31   \pm   2.78   $ & $   54.13   \pm   7.19   $ & $   99.13   \pm   0.78   $ & $   95.58   \pm   0.79   $ & $   48.35   \pm   6.69   $ & $   94.95   \pm   2.01   $ & $   95.46   \pm   2.09   $ & $   94.95   \pm   0.30   $ & $   94.95   \pm   2.19   $ & $   \textbf{100.00}   \pm   0.00     $ \\ 
taxpayer   & $   53.52   \pm   2.10   $ & $   50.72   \pm   1.49   $ & $   46.23   \pm   4.16   $ & $   41.81   \pm   6.53   $ & $   52.96   \pm   2.25   $ & $   50.62   \pm   2.48   $ & $   47.40   \pm   8.37   $ & $   49.69   \pm   3.10   $ & $   50.91   \pm   2.89   $ & $   49.49   \pm   12.06   $ & $   49.49   \pm   3.74   $ & $   \textbf{55.52}   \pm   3.24     $ \\ 
car evaluation   & $   95.14   \pm   1.23   $ & $   75.37   \pm   4.04   $ & $   58.79   \pm   4.44   $ & $   52.95   \pm   2.45   $ & $   94.38   \pm   1.24   $ & $   79.41   \pm   3.56   $ & $   82.45   \pm   1.98   $ & $   77.96   \pm   3.98   $ & $   82.78   \pm   2.34   $ & $   82.78   \pm   0.63   $ & $   82.78   \pm   10.00   $ & $   \textbf{95.35}   \pm   0.74     $ \\ 
CTG   & $   \textbf{97.65}   \pm   1.08   $ & $   90.59   \pm   1.92   $ & $   86.06   \pm   4.91   $ & $   61.72   \pm   3.05   $ & $   96.92   \pm   0.81   $ & $   93.44   \pm   1.81   $ & $   49.02   \pm   17.29   $ & $   95.13   \pm   1.83   $ & $   95.67   \pm   0.94   $ & $   95.49   \pm   0.67   $ & $   95.49   \pm   1.58   $ & $   97.44   \pm   1.45     $ \\ 
US heart   & $   52.21   \pm   2.58   $ & $   44.54   \pm   3.64   $ & $   34.38   \pm   13.70   $ & $   36.48   \pm   3.34   $ & $   50.23   \pm   4.28   $ & $   48.21   \pm   6.03   $ & $   60.49   \pm   2.04   $ & $   56.41   \pm   6.30   $ & $   66.95   \pm   3.18   $ & $   66.79   \pm   3.40   $ & $   66.79   \pm   2.53   $ & $   {67.43}   \pm   3.31     $ \\ 
Alzheimer rowdel   & $   76.49   \pm   2.82   $ & $   73.29   \pm   5.66   $ & $   28.65   \pm   20.21   $ & $   42.26   \pm   9.06   $ & $   73.86   \pm   5.36   $ & $   52.75   \pm   10.29   $ & $   70.67   \pm   3.03   $ & $   27.67   \pm   23.08   $ & $   75.11   \pm   5.32   $ & $   75.11   \pm   8.55   $ & $   75.11   \pm   0.00   $ & $   \textbf{77.45}   \pm   5.36     $ \\ 
Alzheimer clinical   & $   77.43   \pm   2.61   $ & $   \textbf{78.43}   \pm   3.43   $ & $   56.89   \pm   11.68   $ & $   35.72   \pm   4.18   $ & $   77.52   \pm   1.34   $ & $   61.05   \pm   17.09   $ & $   53.87   \pm   3.15   $ & $   72.64   \pm   4.60   $ & $   76.85   \pm   3.78   $ & $   76.85   \pm   3.49   $ & $   76.85   \pm   0.00   $ & $   78.37   \pm   2.47     $ \\ 
thyroid   & $   90.23   \pm   3.61   $ & $   85.78   \pm   4.60   $ & $   82.40   \pm   6.36   $ & $   39.19   \pm   21.55   $ & $   86.07   \pm   8.64   $ & $   86.89   \pm   5.95   $ & $   82.41   \pm   12.77   $ & $   80.38   \pm   7.52   $ & $   87.38   \pm   6.50   $ & $   87.38   \pm   15.70   $ & $   87.38   \pm   6.43   $ & $   \textbf{91.41}   \pm   4.26     $ \\ 
ecoli 1   & $   97.39   \pm   1.19   $ & $   97.62   \pm   2.02   $ & $   97.33   \pm   2.18   $ & $   54.17   \pm   5.58   $ & $   96.06   \pm   2.35   $ & $   97.02   \pm   1.21   $ & $   51.47   \pm   2.40   $ & $   96.64   \pm   2.13   $ & $   97.36   \pm   2.37   $ & $   97.36   \pm   1.80   $ & $   97.36   \pm   4.46   $ & $   \textbf{97.95}   \pm   1.80     $ \\ 
ecoli 123   & $   88.74   \pm   3.53   $ & $   \textbf{90.24}   \pm   3.45   $ & $   43.32   \pm   13.54   $ & $   57.77   \pm   8.98   $ & $   82.25   \pm   6.08   $ & $   86.06   \pm   6.16   $ & $   47.82   \pm   7.74   $ & $   79.41   \pm   7.71   $ & $   89.88   \pm   3.93   $ & $   88.41   \pm   0.00   $ & $   88.41   \pm   11.42   $ & $   \textbf{89.73}   \pm   4.36     $ \\ 
ecoli 4   & $   75.41   \pm   14.30   $ & $   88.02   \pm   7.56   $ & $   84.73   \pm   6.13   $ & $   59.30   \pm   6.12   $ & $   78.80   \pm   7.05   $ & $   78.49   \pm   4.61   $ & $   90.09   \pm   5.35   $ & $   71.67   \pm   6.63   $ & $   87.66   \pm   7.30   $ & $   86.88   \pm   2.38   $ & $   86.88   \pm   28.25   $ & $   \textbf{91.37}   \pm   4.21     $ \\ 
ecoli 1234   & $   89.83   \pm   9.61   $ & $   83.62   \pm   10.94   $ & $   57.07   \pm   14.60   $ & $   53.66   \pm   10.18   $ & $   87.22   \pm   10.43   $ & $   89.69   \pm   9.73   $ & $   60.94   \pm   21.23   $ & $   87.13   \pm   13.96   $ & $   92.40   \pm   9.47   $ & $   87.04   \pm   0.00   $ & $   87.04   \pm   22.52   $ & $   \textbf{95.91}   \pm   4.68     $ \\ 
ecoli 5   & $   91.69   \pm   6.79   $ & $   89.68   \pm   5.60   $ & $   64.19   \pm   19.36   $ & $   62.50   \pm   8.89   $ & $   93.90   \pm   6.87   $ & $   91.46   \pm   11.60   $ & $   49.83   \pm   21.56   $ & $   94.18   \pm   6.50   $ & $   97.75   \pm   0.96   $ & $   93.13   \pm   4.21   $ & $   93.13   \pm   30.28   $ & $   \textbf{98.40}   \pm   0.88     $ \\ 
ecoli 678   & $   88.13   \pm   14.23   $ & $   88.28   \pm   14.35   $ & $   72.25   \pm   15.76   $ & $   38.28   \pm   46.89   $ & $   74.03   \pm   38.78   $ & $   88.28   \pm   14.35   $ & $   62.45   \pm   8.63   $ & $   88.28   \pm   14.35   $ & $   87.87   \pm   14.24   $ & $   84.95   \pm   0.00   $ & $   84.95   \pm   38.67   $ & $   \textbf{92.80}   \pm   11.43     $ \\ 
\hline
Average rank of G-mean &  4.48 &  5.33 &  10.14 &  11.43 &  6.10 &  7.14 &  8.67 &  8.29 &  3.52 &  4.81 &  4.81 &    \textbf{1.33}
\\
\hline
\end{tabular}
}
\caption{\textcolor{black}{Public benchmark data G-mean results.}}
\label{table: benchmark data G-mean results}
\end{sidewaystable} %

\begin{sidewaystable} %
\centering
\resizebox{\textwidth}{!}{
\begin{tabular}{c|ccccccccBBBc}\hline

\multirow{2}{*}{Dataset} & \multicolumn{12}{c}{CPU time (s)}  \\
\cline{2-13}
  &  LS-$\mathfrak{U}$-QTSVM      &  LS-$\mathfrak{U}$-TSVM      &  LS-QTSVM      &  LS-TSVM      &  $\mathfrak{U}$-QTSVM      &  $\mathfrak{U}$-TSVM      &  TSVM      &  SVM   & CSSVM & URKWELM & FHTPSVM &  Im-LS-$\mathfrak{U}$-QTSVM      \\
\hline
pima & 0.141 & 0.013 & 0.128 & 0.181 & 0.537 & 0.671 & 0.798 & 0.035 & 0.045 & 1.205 & 0.233 & 0.199 \\
blood transfusion & 0.053 & 0.013 & 0.047 & 0.184 & 0.664 & 0.993 & 1.025 & 0.027 & 0.044 & 0.668 & 0.635 & 0.106 \\
seismic bumps & 1.690 & 0.079 & 1.555 & 4.766 & 41.329 & 33.805 & 54.145 & 0.126 & 0.484 & 4.742 & 1.940 & 2.847 \\
haberman & 0.019 & 0.008 & 0.016 & 0.043 & 0.097 & 0.152 & 0.178 & 0.009 & 0.013 & 0.112 & 0.071 & 0.032 \\
wholesale & 0.067 & 0.007 & 0.062 & 0.066 & 0.205 & 0.198 & 0.257 & 0.010 & 0.016 & 0.243 & 0.303 & 0.088 \\
liver disorder & 0.038 & 0.007 & 0.034 & 0.048 & 0.099 & 0.121 & 0.057 & 0.011 & 0.014 & 0.060 & 0.094 & 0.047 \\
wine & 0.065 & 0.004 & 0.065 & 0.008 & 0.082 & 0.021 & 0.019 & 0.004 & 0.007 & 0.070 & 0.013 & 0.070 \\
balance & 0.040 & 0.010 & 0.034 & 0.104 & 0.277 & 0.408 & 0.324 & 0.007 & 0.014 & 0.292 & 0.175 & 0.031 \\
taxpayer & 0.150 & 0.012 & 0.140 & 0.123 & 0.417 & 0.480 & 0.475 & 0.043 & 0.044 & 0.214 & 0.078 & 0.156 \\
car evaluation & 0.195 & 0.039 & 0.164 & 1.031 & 3.272 & 3.861 & 5.591 & 0.088 & 0.106 & 1.326 & 0.833 & 0.345 \\
CTG & 3.303 & 0.059 & 3.126 & 2.399 & 9.994 & 11.610 & 2.344 & 0.070 & 0.106 & 2.705 & 1.113 & 4.118 \\
US heart & 2.719 & 0.137 & 2.211 & 10.382 & 59.138 & 72.856 & 103.524 & 0.400 & 1.091 & 11.199 & 3.571 & 3.661 \\
Alzheimer rowdel & 0.081 & 0.007 & 0.077 & 0.056 & 0.230 & 0.252 & 0.243 & 0.011 & 0.014 & 0.091 & 0.231 & 0.141 \\
Alzheimer clinical & 0.114 & 0.024 & 0.096 & 0.590 & 1.965 & 2.174 & 2.189 & 0.070 & 0.081 & 1.013 & 0.691 & 0.178 \\
thyroid & 0.022 & 0.005 & 0.020 & 0.020 & 0.072 & 0.071 & 0.054 & 0.006 & 0.011 & 0.201 & 0.048 & 0.042 \\
ecoli 1 & 0.052 & 0.007 & 0.050 & 0.045 & 0.140 & 0.130 & 0.119 & 0.005 & 0.009 & 0.103 & 0.094 & 0.061 \\
ecoli 123 & 0.050 & 0.007 & 0.046 & 0.061 & 0.158 & 0.127 & 0.040 & 0.006 & 0.012 & 0.082 & 0.052 & 0.087 \\
ecoli 4 & 0.051 & 0.007 & 0.053 & 0.035 & 0.176 & 0.274 & 0.219 & 0.005 & 0.010 & 0.142 & 0.023 & 0.099 \\
ecoli 1234 & 0.052 & 0.006 & 0.046 & 0.025 & 0.186 & 0.155 & 0.062 & 0.005 & 0.010 & 0.093 & 0.252 & 0.096 \\
ecoli 5 & 0.053 & 0.007 & 0.048 & 0.031 & 0.172 & 0.198 & 0.246 & 0.005 & 0.009 & 0.125 & 0.375 & 0.094 \\
ecoli 678 & 0.051 & 0.007 & 0.049 & 0.049 & 0.239 & 0.205 & 0.054 & 0.004 & 0.011 & 0.229 & 0.031 & 0.097 \\
\hline
\end{tabular}
}
\caption{CPU time (s) consumed by implementing the model with fixed parameters.}
\label{table: benchmark data CPU time results}
\end{sidewaystable} 

For each tested model, we record the mean and the standard deviation of the classification accuracy and G-mean on each data set in Tables \ref{table: benchmark data accuracy results} and \ref{table: benchmark data G-mean results}, respectively. The average ranks of accuracy and G-mean scores on all benchmark datasets are also calculated for each tested model. Moreover, the highest accuracy and G-mean scores on each benchmark dataset are highlighted. In addition, we have recorded the CPU time consumed by each tested model on all the benchmark datasets in Table \ref{table: benchmark data CPU time results}.

\begin{figure}[t]
  \centering
\includegraphics[width=.9\textwidth]{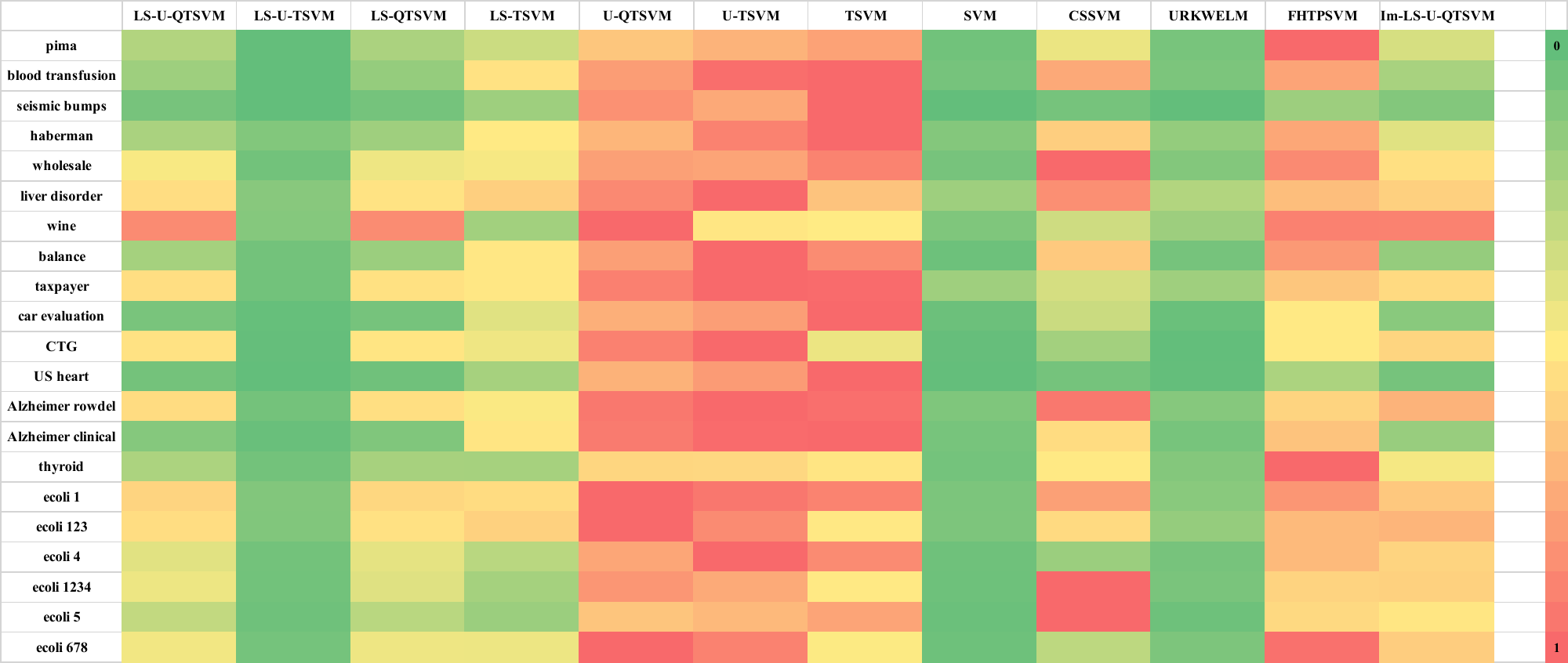}
  \caption{\textcolor{black}{Heatmap of relative rank of CPU time. For each row, the unit is closer to red if its CPU time is longer, while the unit is closer to green if the CPU time is shorter.}}
  \label{figure: heatmap of CPU time}
\end{figure}

The observations from the results in Tables \ref{table: benchmark data accuracy results} and \ref{table: benchmark data CPU time results} can be summarized as follows.

\begin{itemize}
    \item Compared with all the other tested models, the proposed Im-LS-$\mathfrak{U}$-QTSVM achieves the highest mean accuracy and G-mean scores on most of the benchmark datasets. Besides, it provides the highest average rank of both accuracy and G-mean score. In other words, the proposed model provides a better general classification accuracy than other tested models do in this experiment.

    \item In general, the CPU time consumed by the Im-LS-$\mathfrak{U}$-QTSVM model is acceptable. As a least squares model, the proposed Im-LS-$\mathfrak{U}$-QTSVM does not lose efficiency when the size of data set increases. Even for the relatively large-scale datasets \textit{seismic bumps}, \textit{US heart}, and \textit{CTG}, the CPU time consumed by the proposed model is acceptable, and is less than that consumed by the other twin SVM models without least squares. 

    \item Furthermore, we plot a heatmap in Figure \ref{figure: heatmap of CPU time} to show the comparison of efficiency among all the tested models on each dataset. We may observe that, in most cases, the relative rank of the CPU time consumed by the proposed Im-LS-$\mathfrak{U}$-QTSVM model stays at the medium level among all the tested models. Compared with other tested models on a given dataset, the proposed model might be more efficient if the ratio $N/n$ is higher, where $n$ is the number of features and $N$ is the number of data samples. 
\end{itemize}

\subsection{Statistical analysis}
\label{subsection: statistical analysis}

Some statistical tests are conducted to further analyze the computational results we received in Section \ref{subsection: numerical section public benchmark datasets}. We first conduct the Friedman test \cite{demvsar2006statistical} to evaluate the classification accuracy in Table \ref{table: benchmark data accuracy results}. The null hypothesis is that all the tested models have the same classification accuracy.

Let $p$ and $q$ be the number of datasets and the number of tested models, respectively. Hence, $p = 21$ and $q = 12$, and the $\chi_F^2$ value of the Friedman test can be calculated by the formula (\ref{eq: Friedman test chi^2_F})

\begin{equation}
    \label{eq: Friedman test chi^2_F}
    \chi_F^2 = \frac{12p}{q(q+1)} 
    \left( \sum_{i=1}^q R_i^2 - \frac{q(q+1)^2}{4} \right),
\end{equation}
where $R_i$ is the average rank of accuracy by the $i$th model. And the statistics $F_F$ is then calculated in (\ref{eq: Friedman test F_F})

\begin{equation}
    \label{eq: Friedman test F_F}
    F_F = \frac{(p-1)\chi_F^2}{p(q-1) - \chi_F^2}, 
\end{equation}
where $F_F$ is distributed according to the $F$-distribution with degrees of freedom $(q-1, (p-1)\times(q-1)) = (11, 220)$. \textcolor{black}{Similarly, the Friedman test is applied to evaluate the G-mean recorded in Table \ref{table: benchmark data G-mean results} using formulas \eqref{eq: Friedman test chi^2_F} and \eqref{eq: Friedman test F_F}. The key parameters of Friedman tests on accuracy and G-mean are listed in Table \ref{table: Friedman test parameters}. With the significance level to be $\alpha = 0.05$, the critical value of $F_F$ is about 2.03, which is far less than the $F_F$ of either accuracy or G-mean. Therefore, we are confident in rejecting the null hypothesis. In other words, there exists a significant difference among all the tested models with respect to classification accuracy and G-mean.}

\begin{table}[h]
  \centering
    \begin{tabular}{l l l}
  \hline Metrics & $\chi_F^2$ & $F_F$ \\
  \hline
    Accuracy & 109.385 &  17.990 \\
    G-mean & 106.308 & 17.051 \\
    \hline
    \end{tabular}
    
  \caption{\textcolor{black}{Key parameters of Friedman test.}}
  \label{table: Friedman test parameters}
\end{table}  

{
\color{black}
To further analyze the difference, the post-hoc Nemenyi test is performed as follows. The statistics \textit{CD} is calculated by (\ref{eq: Nemenyi CD value})

\begin{equation}
    \label{eq: Nemenyi CD value}
    CD = q_{.05} \sqrt{\frac{q(q+1)}{6p}} \approx 3.636,
\end{equation}
where $q_{.05} = 3.268$ is the critical value of the Tukey distribution. The results from the Nemenyi post-hoc test on both accuracy and G-mean are visualized in Figure \ref{figure: Nemenyi test}.
}

\begin{figure}
    \centering
    \begin{subfigure}[b]{0.48\textwidth}
        \includegraphics[width=\textwidth]{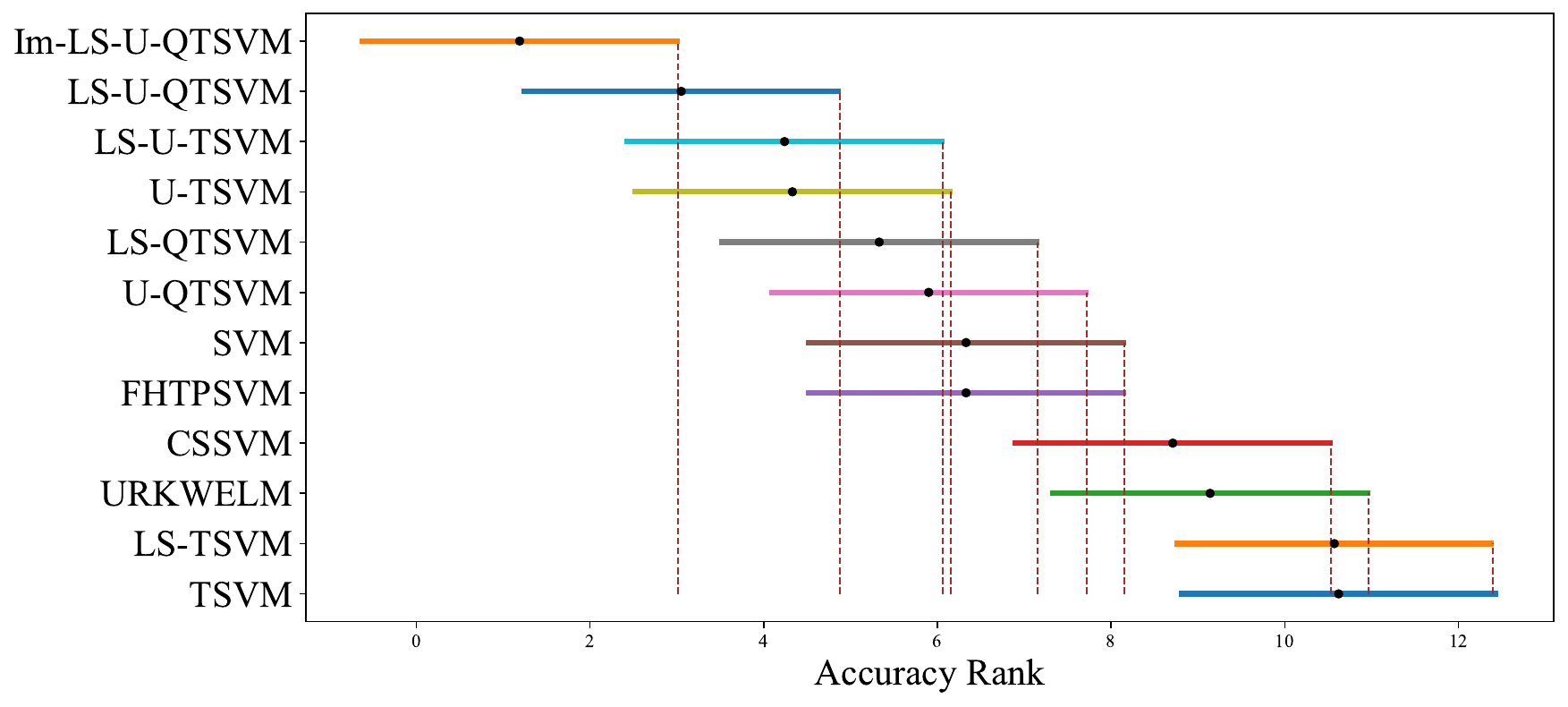}
          \caption{Nemenyi test on Accuracy.}
  \label{figure: Nemenyi_acc}
    \end{subfigure}
    ~ 
    \begin{subfigure}[b]{0.48\textwidth}
        \includegraphics[width=\textwidth]{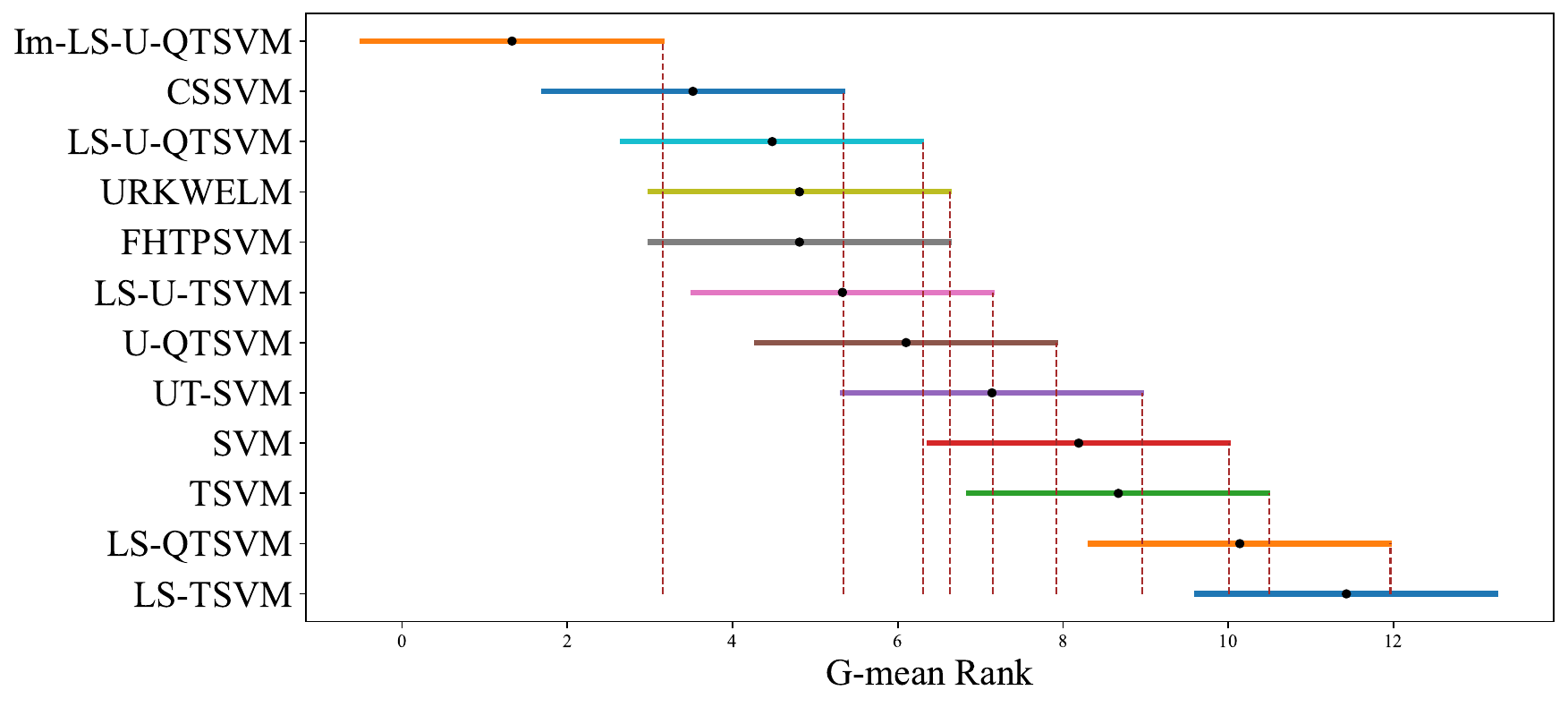}
          \caption{Nemenyi test on G-mean.}
  \label{figure: Nemenyi_gm}
    \end{subfigure}
    \caption{\textcolor{black}{Visualization of Nemenyi test.}}\label{figure: Nemenyi test}
\end{figure}

{\color{black}

From Figure \ref{figure: Nemenyi_acc}, we notice that the average rank in accuracy of Im-LS-$\mathfrak{U}$-QTSVM is close to LS-$\mathfrak{U}$-QTSVM. From Figure \ref{figure: Nemenyi_acc}, we notice that the average rank in G-mean of Im-LS-$\mathfrak{U}$-QTSVM is close to CSSVM. Therefore, we further apply the Wilcoxon signed-rank test as suggested in \cite{benavoli2016should} to show the difference between the Im-LS-$\mathfrak{U}$-QTSVM model and each other tested model. For each metric, the null hypothesis is that there is no significant difference between the two models. Let $\beta_j$ be the difference between the metric of the two models on the $j$th dataset. The calculated values $R^+$ and $R^-$ defined as follows
\begin{align*}   
R^+ &= \sum_{\beta_i > 0} \text{rank}(\beta_i) + \frac{1}{2}\sum_{\beta_i = 0} \text{rank}(\beta_i), \\
R^- &= \sum_{\beta_i < 0} \text{rank}(\beta_i) + \frac{1}{2}\sum_{\beta_i = 0} \text{rank}(\beta_i),
\end{align*}
Let $R_{\min}  = \min \{ R^+, R^- \}$ and calculate the statistics $z$:
$$
z = \frac{R_{\min} - \frac{1}{4} p(p+1)}{\sqrt{\frac{1}{24} p(p+1)(2p+1)}}.
$$
For each metric, we calculate the $z$ value between the Im-LS-$\mathfrak{U}$-QTSVM model and each benchmark model. All the $z$ values are recorded in Table \ref{table: Wilcoxon test z values}. Notice that all the $z$-statistic values are less than $-1.96$, which leads to the rejection of the null hypothesis with a $0.05$ confidence level.

In conclusion, the proposed Im-LS-$\mathfrak{U}$-QTSVM model significantly outperforms all the other tested models with respect to classification accuracy on the benchmark datasets in Table \ref{table: pubic benchmark data info}.
}

\begin{table}[h]
  \centering
    \begin{tabular}{l l l}
  \hline Compared Models & Accuracy & G-mean \\
  \hline
LS-$\mathfrak{U}$-QTSVM  &   -3.15  &  -3.74  \\
LS-$\mathfrak{U}$-TSVM  &   -3.53  &   -2.94  \\
LS-QTSVM  &   -4.01  &   -4.01  \\
LS-TSVM  &   -4.01  &   -4.01  \\
$\mathfrak{U}$-QTSVM  &   -4.01  &   -4.01  \\
$\mathfrak{U}$-TSVM  &   -2.80  &   -3.89  \\
TSVM  &   -4.01  &   -4.01  \\
SVM  &   -3.91  &   -4.01  \\
CSSVM  &   -4.01  &   -2.94  \\
URKWELM  &   -3.32  &   -3.94  \\
FHTPSVM  &   -3.82  &   -3.94  \\
    \hline
    \end{tabular}
    
  \caption{\textcolor{black}{(Wilcoxon signed-rank test) $z$-statistic between Im-LS-$\mathfrak{U}$-QTSVM and each compared model.}}
  \label{table: Wilcoxon test z values}
\end{table}  
}

\section{Conclusions { and Future Research Directions}}
\label{sec: Conclusions}

In this paper, we introduced three kernel-free Universum quadratic surface support vector machine models for binary classification: $\mathfrak{U}$-QTSVM, LS-$\mathfrak{U}$-QTSVM, and Im-LS-$\mathfrak{U}$-QTSVM. { Binary classification with imbalanced classes remains a major challenge in machine learning, especially when minority class instances are scarce. Traditional classifiers often exhibit bias toward the majority class, leading to poor prediction performance on the minority class and potentially serious consequences in real-world applications such as medical diagnosis, fraud detection, or rare-event prediction.} To address this issue, we developed the Im-LS-$\mathfrak{U}$-QTSVM model, tailored for imbalanced datasets. We examined several theoretical properties and proposed an efficient algorithm for this model. Our computational experiments confirmed the effectiveness and efficiency of the proposed models. The main findings are summarized below:

\begin{itemize}
    \item { We proposed the $\mathfrak{U}$-QTSVM model, which leverages Universum points to provide additional prior knowledge that supports minority class representation. This approach enhances the classifier’s ability to capture subtle patterns in complex datasets, improving generalization performance without requiring kernel-based transformations. Its least-squares version, LS-$\mathfrak{U}$-QTSVM, further simplifies the optimization problem by converting it into a system of linear equations, making it computationally efficient for moderate-scale datasets.}
    
    \item To tackle the challenge of imbalanced datasets, we adapted $\mathfrak{U}$-QTSVM to create Im-$\mathfrak{U}$-QTSVM and its least-square version, Im-LS-$\mathfrak{U}$-QTSVM, which reduces computational complexity by solving a system of equations. { Indeed, these models explicitly focus on improving classification of the minority class by adjusting the margin and leveraging Universum points strategically, which reduces the bias toward the majority class. The least-squares formulation reduces computational complexity while maintaining strong predictive performance, making it suitable for real-world imbalanced datasets.}
    
    \item We explored several theoretical properties of the Im-LS-$\mathfrak{U}$-QTSVM model.
    
    \item Numerical experiments demonstrated the impact of parameters on classification accuracy. The promising results on various artificial and public benchmark datasets indicate the effectiveness of the proposed models in addressing real-world binary classification problems.
\end{itemize}

{ Building on this work, future research could extend the proposed models to multi-class imbalanced datasets to handle more complex classification scenarios. Exploring adaptive selection or generation of Universum points may further improve generalization and reduce the need for manual parameter tuning. Developing efficient solvers or approximation methods for high-dimensional or large-scale datasets would enhance the models’ practical applicability. Immediate future work also includes assessing the robustness of the proposed models across diverse binary classification tasks and exploring extensions to semi-supervised learning settings.

In summary, the proposed Universum quadratic surface SVM models provide a flexible, computationally efficient, and effective solution for imbalanced binary classification. By addressing the limitations of traditional SVMs and explicitly considering minority class support, these models open up opportunities for reliable deployment in real-world applications. The future directions outlined above provide a roadmap for enhancing the generalizability, scalability, and robustness of these models, making them even more compelling for researchers and practitioners.}

\section*{Compliance with Ethical Standards}
\begin{itemize}
    \item \textbf{Conflict of interest:} The authors declare that they have no conflicts of interest.
    \item \textbf{Ethical approval:} This article does not contain any studies with human participants or animals performed by any of the authors.
    \item \textbf{Informed consent:} Informed consent was obtained from all individual participants included in the study.
    \item \textbf{Data availability statement:} The following GitHub repository provides all the links to the publicly available datasets that have been used for the numerical experiments of this study: \texttt{https://github.com/tonygaobasketball/Sparse-UQSSVM-Mode} \linebreak
\texttt{ls-for-Binary-Classification}. The code will be provided by the authors upon request.
\end{itemize}
\textbf{Authorship contributions:} 
\textbf{Hossein Moosaei:} Conceptualization, Formal analysis, Investigation, Methodology, Project administration,  Writing – review \& editing. \textbf{Milan Hlad\'{i}k:} Conceptualization, Formal analysis,  Project administration, Supervision, Validation,  Writing – review \& editing. \textbf{Ahmad Mousavi:}  Formal analysis, Investigation, Writing, and Editing.  \textbf{Zheming Gao:} Conceptualization, Data curation, Formal analysis, Investigation, Methodology, Software, Validation, Supervision, Writing – original draft. \textbf{Haojie Fu:} Software, validation, Writing – original draft.


\medskip
\medskip

\appendix

\end{document}